\documentclass[journal]{IEEEtran}
\usepackage{amsmath,amssymb,mathtools,amsthm,bm,nicefrac}
\usepackage{graphicx,booktabs,multirow,xcolor}
\usepackage{tikz}
\usepackage{algorithm,algorithmic}                    
\usepackage{enumitem}                                 
\usepackage{refcount}
\usepackage{xspace}
\usepackage[numbers,sort&compress]{natbib}            

\newcommand{\etal}{\emph{et al.}}
\RequirePackage{graphicx}
\RequirePackage{tikz}

\providecommand{\FigDir}{figures/}

\newcommand{\FigPlaceholder}[3]{%
  \typeout{FigPlaceholder: #3 -> \FigDir prompts/#3.md (image not generated yet)}%
  \begin{tikzpicture}
    \useasboundingbox (0,0) rectangle (#1,#2);         
    \draw[gray!70, dashed, line width=0.6pt, fill=gray!6, rounded corners=2pt]
      (0,0) rectangle (#1,#2);
    \node[align=center, font=\footnotesize\sffamily, text=gray!80!black, text width=0.9\dimexpr#1\relax]
      at (0.5*#1,0.5*#2)
      {\textbf{Figure placeholder} \texttt{\detokenize{#3}}\\[2pt]
       prompt: \texttt{\FigDir prompts/\detokenize{#3}.md}\\[2pt]
       final size: \the\numexpr\dimexpr#1\relax/186468\relax\,mm $\times$
       \the\numexpr\dimexpr#2\relax/186468\relax\,mm};
  \end{tikzpicture}%
}

\newcommand{\FigOrPrompt}[3]{%
  \IfFileExists{\FigDir#3.pdf}{\FigOrPromptImg{#1}{#2}{\FigDir#3.pdf}}{%
  \IfFileExists{\FigDir#3.png}{\FigOrPromptImg{#1}{#2}{\FigDir#3.png}}{%
  \IfFileExists{\FigDir#3.jpg}{\FigOrPromptImg{#1}{#2}{\FigDir#3.jpg}}{%
  \FigPlaceholder{#1}{#2}{#3}}}}%
}
\newcommand{\FigOrPromptImg}[3]{%
  \parbox[c][#2][c]{#1}{\centering\includegraphics[width=#1,height=#2,keepaspectratio]{#3}}%
}

\newcommand{\TitleText}{Protective Perturbations Must Survive the Resize: Scale-Robust Image Immunization against Malicious Editing}
\newcommand{\method}{SRIM\xspace}
\newcommand{\R}{\mathbb{R}}

\newcommand{\bx}{\bm{x}}
\newcommand{\bdelta}{\bm{\delta}}
\newcommand{\bu}{\bm{u}}
\newcommand{\Rs}{\mathcal{R}_s}
\newcommand{\Us}{\mathcal{U}_s}
\newcommand{\Enc}{\mathcal{E}}
\newcommand{\opt}{\mathrm{OPT}}

\definecolor{linkblue}{rgb}{0.21,0.49,0.74}
\usepackage[breaklinks,colorlinks,allcolors=linkblue]{hyperref}
\usepackage[capitalize,noabbrev]{cleveref}
\theoremstyle{plain}
\newtheorem{theorem}{Theorem}

\AddToHook{env/proposition/begin}{\crefalias{theorem}{proposition}}
\AddToHook{env/lemma/begin}{\crefalias{theorem}{lemma}}
\AddToHook{env/corollary/begin}{\crefalias{theorem}{corollary}}
\theoremstyle{definition}

\newtheorem{assumption}{Assumption}
\theoremstyle{remark}

\title{\TitleText}
\author{Zhongliang~Guo,
        Yan~Lin,
        Yifei~Qian,
        Daizong~Liu,
        Ognjen Arandjelovi\'c,
        Jianfeng Dong\\
        and~Chun Pong Lau, \IEEEmembership{Member, IEEE}%
\thanks{Zhongliang Guo is with the School of Natural and Computing Science, University of Aberdeen, Aberdeen, U.K. (email: zhongliang.guo@abdn.ac.uk).}%
\thanks{Yan Lin is with the School of Computing, Newcastle University, Newcastle upon Tyne, U.K. (email: y.lin64@ncl.ac.uk).}%
\thanks{Yifei Qian is with the School of Computer Science, University of Nottingham, Nottingham, U.K. (email: yifei.qian1@nottingham.ac.uk).}%
\thanks{Daizong Liu is with the Institute for Math \& AI, Wuhan University, Wuhan, China (email: daizongliu@whu.edu.cn).}%
\thanks{Ognjen Arandjelovi\'c is with the School of Computer Science, University of St Andrews, St Andrews, U.K. (email: oa7@st-andrews.ac.uk).}%
\thanks{Jianfeng Dong is with the College of Computer and Information Engineering, Zhejiang Gongshang University, Hangzhou, China (email: jianfeng.dong@zjgsu.edu.cn).}%
\thanks{Chun Pong Lau is with the Department of Data Science, City University of Hong Kong, Hong Kong, China (email: cplau27@cityu.edu.hk).}
\thanks{Corresponding author(s): Daizong Liu and Chun Pong Lau.}
}
\begin{document}
\maketitle
\begin{figure*}[!t]
\centering
\FigOrPrompt{\textwidth}{0.415\textwidth}{teaser}
\caption{A protected 11-megapixel photo edited by FLUX.2-klein (``Make it snowy winter'') after downscaling by $s$. Top and bottom rows: real edits under \method and under Mist+EOT, the strongest published protection with EOT; badges give the edit LPIPS to the unprotected edit (higher means a stronger disruption). Middle row: the editor's inputs, drawn to scale (long side 2048, 1024, and 512 pixels). Ribbons: mean edit LPIPS over the 160 test photos (right scale), including our objective with EOT, which drops at the large-scale end ($s=8$). Shaded: the scale range covered for this photo; $s=8$ lies outside it. Insets: a native-resolution crop of the protected photo and the perturbation amplified $\times$24.}
\label{fig:teaser}
\end{figure*}

\begin{abstract}
While instruction-guided editors built on diffusion transformers can turn a personal photo into a convincing forgery with a single sentence, protective perturbations aim to make such edits fail before the photo is ever shared.
The threat is most acute for the photos people actually share, which have 10 megapixels or more and which current editors first downscale by a factor that the owner does not know.
Existing protections are computed at one resolution or averaged over random scales, and they lose most of their effect once the photo is downscaled.
They implicitly assume that the editor sees the scale for which they were optimized, or that protecting the average scale protects every scale.
We propose \method, a protection that remains effective at every scale an editor may use, without knowing its preprocessing and at the same computational cost as standard expectation over transformation.
\method covers the range of scales implied by the photo size with anchor scales and spends each optimization step on the scale where the protection is currently weakest.
A spectral analysis shows that this worst-case protection degrades only logarithmically with the width of the range, a value that averaging over scales cannot reach.
Extensive experiments on full-resolution photos and popular instruction-guided editors validate that \method gives the strongest protection, outperforming existing methods even with a quarter of their $\ell_\infty$ budget, remaining the strongest under input purification.
\end{abstract}

\begin{IEEEkeywords}
Image immunization, adversarial perturbation, diffusion models, image editing, scale robustness.
\end{IEEEkeywords}
\section{Introduction}
\label{sec:intro}

\IEEEPARstart{W}{hile} instruction-guided editors built on diffusion and flow-matching transformers~\cite{peebles2023dit,lipman2023flowmatching}, such as FLUX.1 Kontext~\cite{labs2025kontext}, Qwen-Image-Edit~\cite{wu2025qwenimage}, and FLUX.2, have made photo editing accessible to everyone, they also let anyone turn a downloaded personal photo into a convincing forgery with a single sentence of text.
A growing line of work therefore asks the owner to \emph{immunize} a photo before sharing it: a small adversarial perturbation is added so that later edits of the image fail~\cite{salman2023photoguard,liang2023advdm,liang2023mist,xue2024sds,choi2025diffusionguard,wang2025facelock,guo2025pca,shen2025decontext,liu2026diffprotect,yam2025myface}.

In this paper, we focus on immunization in the setting in which it is used.
The owner shares the photo at its native resolution, typically 10 to 30 megapixels, while the editor resizes its input to between $0.25$ and $4$ megapixels before encoding it.
The editor therefore sees the protected photo only after an antialiased downscaling by a factor $s$ that the owner does not know.
A protection that fails after this resize leaves the owner unprotected exactly in the situation that immunization is meant to cover.

In general, existing protections can be roughly divided into two categories.
Protections of the first category are computed at one resolution: either the editor's working resolution, typically $512\times512$ pixels for Stable Diffusion editors and about one megapixel for diffusion transformers, or the native resolution of the photo.
Protections of the second category average the perturbation over random transformations with expectation over transformation (EOT)~\cite{athalye2018eot,yao2025eolt}.
Despite their effectiveness at the editor's working resolution, both categories lose most of their effect at the scales that real editors use: a single $0.5\times$ resize already removes most of the protection of existing methods~\cite{sun2026doprotective}.
As shown in \cref{fig:teaser}, once the photo is downscaled, the strongest published protection with EOT no longer stops the edit.
We argue that this limitation stems from a latent assumption of existing protections: that the editor sees the protected image at the scale for which the protection was optimized, or, with EOT, that protecting the average scale protects every scale.
Viewed as a frequency-selective channel, the resize discards most of a native-resolution perturbation, keeps a fixed-resolution perturbation effective only in a narrow window of scales, and leaves the ends of the range uncovered when the protection is averaged over scales (\cref{sec:theory}).

An ideal protection should therefore satisfy three requirements:
\begin{enumerate}[label=(\roman*)]
    \item It should remain effective at every scale in the range that an unknown editor may use, which fixed-resolution protections fail to meet because they cover only a window of scales.
    \item It should protect the weakest scale of this range rather than the average one, which EOT fails to meet because it trades the ends of the range for its middle.
    \item It should be computable on a 10 to 30 megapixel photo at the memory and time of existing protections, without knowing the editor's preprocessing, which native-resolution application fails to meet because it exceeds the memory of a 48\,GB GPU.
\end{enumerate}
Meeting (ii) without sacrificing (iii) is the central difficulty, since evaluating every scale at every step multiplies the cost by the number of scales.

To this end, we propose \method (Scale-Robust IMmunization), whose key idea is to treat the unknown resize as a worst case over a range of scales, in the spirit of group distributionally robust optimization~\cite{sagawa2020dro}, rather than as one more random transformation.
To meet (i), \method places anchor scales on a grid that covers the whole range implied by the photo size, with random jitter and resampling kernels.
To meet (ii), it draws one anchor per step from a distribution that tracks the scale at which the protection is currently weakest.
To meet (iii), each step encodes a single downscaled copy, so \method costs the same as EOT and does not need the editor's preprocessing.
We also provide theoretical analysis showing that, under a linearized spectral model, the best worst-case protection degrades only logarithmically with the width of the scale range, whereas averaging over scales does not attain it (\cref{sec:method-theory}).

In summary, the main contributions of this paper are as follows:
\begin{itemize}
\item We revisit image immunization under the resize that editors apply to shared photos, and reveal the root cause of its failure: the protection is optimized for a scale, or an average of scales, that the editor does not use.
\item Based on our analysis, we propose \method, which covers the whole range of scales with anchors and spends each step on the currently weakest one, as cheaply as EOT.
\item We theoretically show that the best worst-case protection under an unknown resize degrades only logarithmically with the scale range, and that averaging over scales does not attain it.
\item Experiments on full-resolution photos and four editors, including FLUX.2-dev, show that \method raises the worst-case disruption of edits to 2.3 to 2.4 times that of the strongest published protection on the FLUX.2 editors.
\end{itemize}

\section{Related Work}
\label{sec:related}

\paragraph{Image immunization against malicious editing}
PhotoGuard~\cite{salman2023photoguard} introduced protective perturbations against latent diffusion editing, either by pushing the encoder latent toward a target or by attacking the full diffusion process.
AdvDM~\cite{liang2023advdm} and Mist~\cite{liang2023mist} maximize the denoising loss and add a textural latent loss, and score distillation makes these attacks cheaper~\cite{xue2024sds}.
Guo \etal~\cite{guo2025pca} showed that attacking only the posterior of the variational autoencoder (VAE) already disrupts latent diffusion~\cite{rombach2022ldm} editing and transfers across models that share the autoencoder.
DiffusionGuard~\cite{choi2025diffusionguard} targets inpainting with mask augmentation, FaceLock~\cite{wang2025facelock} optimizes against face recognition features, and DeContext~\cite{shen2025decontext} weakens the attention pathways that carry context in diffusion transformer editors.
Related protections target personalization~\cite{vanle2023antidreambooth,shan2023glaze,yang2025antipure,guo2026artwork,guo2024laaca,ahn2026imperceptible} and cross-model transfer~\cite{zhang2025transferable,sanchez2020privacy}; recent surveys cover the area~\cite{zhang2026survey}.
All of these methods compute the perturbation at the resolution at which the editor will see the image.
Our analysis concerns the step they leave out: the resize between the shared photo and the editor's input.

\paragraph{Robustness of protections}
Protections are known to be fragile under compression, purification, and resizing~\cite{nie2022diffpure,sun2026doprotective,purifyonce2026,ma2026real}.
The standard remedy is expectation over transformation (EOT)~\cite{athalye2018eot,ran2024amda}, and EOLT learns the transformation distribution instead of sampling it uniformly~\cite{yao2025eolt}.
AntiPure adds frequency guidance to resist diffusion purification~\cite{yang2025antipure}.
We show that for an unknown resize the relevant quantity is the worst case over scales rather than the average, derive its optimal value, and give a method that targets it as cheaply as EOT.

\paragraph{Resampling and adversarial perturbations}
Image-scaling attacks exploit downsampling in the opposite direction: they hide a target image that appears only after the victim resizes the input~\cite{xiao2019scaling,quiring2020scaling}.
Low-frequency adversarial perturbations are known to transfer better and to survive defenses that remove high frequencies~\cite{guo2019lowfreq,zhu2025lowfreq}.
Our transfer-function analysis explains why low-frequency content matters for protection against editors in particular: the editor discards the top of its own frequency band, so energy placed there is wasted even before any resize.

\section{Problem Setup}
\label{sec:setup}

\paragraph{Threat model}
An owner shares a photo $\bx\in[0,1]^{3\times H\times W}$ at its native resolution, typically 10 to 30 megapixels.
Before sharing, the owner adds a protective perturbation $\bdelta$ with $\|\bdelta\|_\infty\le\epsilon$ and rounds the result to 8 bits.
An adversary later edits the published image with an instruction-guided editor.
The editor resizes its input to its working resolution, encodes it with a VAE encoder $\Enc$, and generates the edit conditioned on the latent and the instruction.
The owner knows neither which editor will be used nor its preprocessing, so the downscaling factor $s$ is unknown.
Following prior work on latent editors~\cite{guo2025pca,salman2023photoguard}, we assume gray-box access to the VAE encoder of the editor family (the encoders of open editors are public), and evaluate how the protection affects the full editing pipeline.

\paragraph{Resize model}
We write the editor's preprocessing as $\Rs$: an antialiasing low-pass filter followed by decimation that reduces the pixel count by $s^2$.
Common choices (bicubic, bilinear, area, and Lanczos) differ in their kernels; we optimize with randomly drawn differentiable kernels and evaluate with the Lanczos filter of the Python Imaging Library, which none of the methods sees during optimization.
The owner covers the scale range $[s_{\min},s_{\max}]$ implied by the photo size and the range of editor working resolutions, $s_{\min}=\sqrt{HW/P_{\max}}$ and $s_{\max}=\sqrt{HW/P_{\min}}$ with $P_{\min}=0.25$ and $P_{\max}=4.2$ megapixels.
For a 12-megapixel photo this gives $s\in[1.7,7.1]$.

\paragraph{Objective}
Let $\Enc_s(\bx)=\Enc(\Rs\bx)$ be the latent the editor computes at scale $s$.
We measure the effect of $\bdelta$ at scale $s$ by the relative latent deviation
\begin{equation}
  D_s(\bdelta)=\frac{\|\Enc_s(\bx+\bdelta)-\Enc_s(\bx)\|_2}{\|\Enc_s(\bx)\|_2},
  \label{eq:dev}
\end{equation}
and the success of the protection by how much the edit of the protected photo departs from the edit of the clean photo under the same instruction and random seed (\cref{sec:exp}).
Since an edit is disrupted only once the deviation crosses a threshold, the owner cares about the \emph{weakest} scale:
\begin{equation}
  \max_{\|\bdelta\|_\infty\le\epsilon}\ \min_{s\in[s_{\min},s_{\max}]} D_s(\bdelta).
  \label{eq:minimax}
\end{equation}

\section{Revisiting Existing Protections}
\label{sec:theory}

Before presenting our method, we first revisit existing protections to understand why they fall short under an unknown resize.
We analyze them in a linearized spectral model.

\paragraph{Spectral model}
We write frequencies in units of the Nyquist frequency, so that the native grid carries $|\omega|\le1$.
An ideal antialiased downscaling by $s$ keeps the components with $|\omega|\le 1/s$.
It moves them to the working-grid frequency $\nu=s|\omega|$ and preserves their amplitude.
We make two assumptions, both checked empirically below.

\begin{assumption}[Local linearity]\label{as:lin}
For $\|\bdelta\|_\infty\le\epsilon$, $\Enc_s(\bx+\bdelta)-\Enc_s(\bx)=J_s\Rs\bdelta+o(\epsilon)$.
\end{assumption}

\begin{assumption}[Isotropic transfer]\label{as:transfer}
The sensitivity of the deviation to a working-grid component depends on its radial frequency only, through a transfer function $g:[\nu_0,1]\to\R_{\ge0}$ with peak value $g^\star=\max_\nu g(\nu)$ at $\nu^\star$, where $\nu_0$ is the lowest frequency on the working grid.
\end{assumption}

Let $\hat\bdelta$ be the spectrum of $\bdelta$, normalized so that $\int|\hat\bdelta|^2=\overline{\delta^2}\le\epsilon^2$ (mean energy per pixel).
Under these assumptions, the deviation \eqref{eq:dev} at scale $s$ collects the surviving spectrum weighted by the transfer function:
\begin{equation}
  D_s(\bdelta)^2=\int_{|\omega|\le 1/s}|\hat\bdelta(\omega)|^2\,g(s|\omega|)^2\,d\omega .
  \label{eq:spectral}
\end{equation}
We compare every perturbation with the per-scale optimum $\opt_s=\max_{\overline{\delta^2}\le\epsilon^2}D_s=g^\star\epsilon$ and write $\rho_s(\bdelta)=D_s(\bdelta)^2/\opt_s^2\in[0,1]$.
In the proofs, $p(\omega)=|\hat\bdelta(\omega)|^2/\overline{\delta^2}$ denotes the normalized spectral density of the perturbation ($\int p=1$), so that \eqref{eq:spectral} gives $\rho_s(\bdelta)=\overline{\delta^2}\epsilon^{-2}\int_{|\omega|\le1/s}p(\omega)\,g(s|\omega|)^2/g^{\star2}\,d\omega\le\int_{|\omega|\le 1/s}p(\omega)\,g(s|\omega|)^2/g^{\star2}\,d\omega$, with equality at full energy $\overline{\delta^2}=\epsilon^2$; upper bounds use the inequality and constructions use full energy.

\begin{figure}[!t]
\centering
\includegraphics[width=\columnwidth]{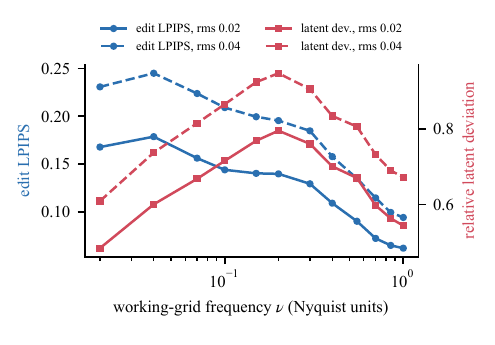}
\caption{Measured transfer of FLUX.2-klein to band-pass noise added at its working resolution (eight development photos at a 1024-pixel long side; two noise levels).
Edit LPIPS (left axis) is largest at the lowest frequencies and decays toward the Nyquist frequency, whereas the VAE latent deviation (right axis) peaks near $\nu\approx0.2$.}
\label{fig:transfer}
\end{figure}

\noindent\textbf{Settings.}
To check \cref{as:transfer}, we add band-pass noise at the working resolution of FLUX.2-klein to eight development photos and record the latent deviation and the edit LPIPS for each frequency band.
The other checks use the development photos and the main results of \cref{sec:exp}.

\noindent\textbf{Results.}
As shown in \cref{fig:transfer}, the editor responds most to low working-grid frequencies and little to the top of its band.
The transfer of the edit is therefore low-pass, while the latent deviation peaks at $\nu\approx0.2$.

\subsection{The Limitations of Native-Resolution Protection}
\label{sec:rev-native}
Computing the protection at the native resolution relies on the latent assumption that a perturbation which moves the latent of the full-resolution photo also moves the latent the editor computes.
However, sign-gradient steps~\cite{madry2018pgd,lau2023ijsat,liu2022mutualat} make such a perturbation close to white noise, and the resize discards most of it.

\begin{theorem}[Decay]\label{thm:decay}
If $\hat\bdelta$ is flat on the native band, then $\rho_s(\bdelta)=\bar g^2/(g^{\star 2}s^2)$ in two dimensions, where $\bar g^2$ is the mean of $g^2$ over the working band.
\end{theorem}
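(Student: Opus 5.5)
Since the statement is about a single explicit spectrum, the plan is to compute $\rho_s$ directly from the spectral formula \eqref{eq:spectral}, taking the perturbation at full energy $\overline{\delta^2}=\epsilon^2$ so that the normalization identity becomes an equality.

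First we pin down what ``flat on the native band'' means in two dimensions. Writing $\omega\in\R^2$ with $|\omega|\le1$ as the native band, a flat spectrum has constant normalized density $p(\omega)=1/|B_1|$, where $|B_1|$ is the area of the native band. The normalization $\int p=1$ then fixes the constant, and
\begin{equation*}
\rho_s(\bdelta)=\frac{1}{|B_1|\,g^{\star2}}\int_{|\omega|\le 1/s} g(s|\omega|)^2\,d\omega .
\end{equation*}

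The key step is the change of variables $\nu=s\omega$, which maps the surviving disc $|\omega|\le 1/s$ onto the working band $|\nu|\le1$. In two dimensions its Jacobian is $d\omega=s^{-2}\,d\nu$; this factor is the source of the $1/s^2$. The integral becomes
\begin{equation*}
s^{-2}\int_{|\nu|\le1}g(|\nu|)^2\,d\nu = s^{-2}\,|B_1|\,\bar g^2 ,
\end{equation*}
where $\bar g^2$ is the area-weighted mean of $g^2$ over the working band. Since the working band has the same normalized shape and area as the native band, $|B_1|$ cancels, and we obtain $\rho_s=\bar g^2/(g^{\star2}s^2)$.

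The only real obstacle is making the conventions consistent, and it has two parts.
\begin{itemize}
\item \emph{The low-frequency cutoff.} \Cref{as:transfer} defines $g$ only on $[\nu_0,1]$. We must either set $g=0$ below $\nu_0$ or read $\bar g^2$ as the mean over the band on which $g$ is defined. If the cutoff is excluded, the small disc $|\nu|<\nu_0$ contributes a correction that we absorb into the definition of $\bar g^2$.
\item \emph{Square versus disc.} A square Nyquist band $[-1,1]^2$ scales under $\nu=s\omega$ in exactly the same way, so the identity holds with either geometry, provided $\bar g^2$ is taken as the mean over the same region.
\end{itemize}
We state explicitly that ``mean over the working band'' means the two-dimensional area average. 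A radial average weighted by $d\nu$ rather than $\nu\,d\nu$ would change the constant but not the $1/s^2$ scaling.
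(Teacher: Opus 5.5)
Your proposal is correct and follows the paper's proof: a flat spectrum gives $p=1/|B|$ on the native disc, and the substitution $\nu=s\omega$ with Jacobian $s^{-2}$ maps the surviving disc of radius $1/s$ onto the working band, which yields $\bar g^2/(g^{\star2}s^2)$. Your explicit remarks on full energy (needed for the equality) and on the cutoff $\nu_0$ are sensible clarifications that the paper leaves implicit.
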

\begin{proof}
A flat spectrum on the native band $B=\{|\omega|\le1\}$ is $p=1/|B|$.
Then $\rho_s=\frac{1}{|B|}\int_{|\omega|\le1/s}g(s|\omega|)^2/g^{\star2}\,d\omega$.
Substituting $\nu=s\omega$ in two dimensions gives $d\omega=d\nu/s^2$ and maps the disk of radius $1/s$ onto $B$, so $\rho_s=\frac{1}{s^2}\frac{1}{|B|}\int_B g(|\nu|)^2/g^{\star2}\,d\nu=\bar g^2/(g^{\star2}s^2)$.
\end{proof}

Two effects compound.
The resize keeps only a fraction $1/s^2$ of the energy.
What remains is spread over the whole working band, although the editor responds mostly to part of it, so $\rho_1=\bar g^2/g^{\star2}<1$ already without resizing.
In line with \cref{thm:decay}, on development photos of 2 megapixels, a native-resolution perturbation that moves the latent strongly changes the edit by only 0.157 LPIPS at $s=1$, and by 0.077 and 0.044 at $s=2$ and $3$.

\subsection{The Limitations of Fixed-Resolution Protection}
\label{sec:rev-fixed}
Computing the protection on a downscaled copy relies on the latent assumption that the editor works at the resolution of that copy.

\begin{theorem}[Lifting]\label{thm:lift}
For box or other nonnegative kernels, $\|\Rs\bdelta\|_\infty\le\|\bdelta\|_\infty$.
Hence no native perturbation exceeds the best perturbation computed at the working resolution, and upsampling that perturbation by nearest-neighbor interpolation attains it.
\end{theorem}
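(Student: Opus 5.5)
The plan has three steps, matching the three claims of \cref{thm:lift}: a sup-norm bound for the resize, a reduction of every native perturbation to a working-resolution one, and a nearest-neighbor lifting that achieves the optimum.

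\emph{Sup-norm bound.} Write the antialiased downscaling as $(\Rs\bdelta)_j=\sum_i k_{ji}\delta_i$. Each output pixel is then a weighted average with $k_{ji}\ge0$ and $\sum_i k_{ji}=1$. The normalization is the standard one that preserves constant images, and it holds for box and other nonnegative kernels after edge renormalization. The triangle inequality then gives
\[
|(\Rs\bdelta)_j|\le\sum_i k_{ji}|\delta_i|\le\|\bdelta\|_\infty .
\]
This is simply the statement that $\Rs$, viewed as a matrix, has $\ell_\infty\to\ell_\infty$ operator norm equal to its maximal absolute row sum, which is $1$.

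\emph{Comparison.} Let $\mathcal{B}_\epsilon^{\mathrm{w}}$ denote the $\ell_\infty$ ball of radius $\epsilon$ on the working grid at scale $s$. Under \cref{as:lin}, and in fact exactly without it, the latent deviation at scale $s$ depends on $\bdelta$ only through $\bu=\Rs\bdelta$:
\[
D_s(\bdelta)=\frac{\|\Enc(\Rs\bx+\bu)-\Enc(\Rs\bx)\|_2}{\|\Enc(\Rs\bx)\|_2}.
\]
By the first step, $\|\bdelta\|_\infty\le\epsilon$ implies $\bu\in\mathcal{B}_\epsilon^{\mathrm{w}}$. Therefore $\max_{\|\bdelta\|_\infty\le\epsilon}D_s(\bdelta)\le\max_{\bu\in\mathcal{B}_\epsilon^{\mathrm{w}}}\tilde D_s(\bu)$, where $\tilde D_s$ is the same deviation evaluated with a perturbation added directly on the working grid. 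This proves the phrase ``no native perturbation exceeds the best perturbation computed at the working resolution.''

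\emph{Attainment.} Take the maximizer $\bu^\star$ on the working grid and let $\Us$ denote nearest-neighbor upsampling. Set $\bdelta=\Us\bu^\star$, so each working pixel's value is replicated over its $s\times s$ block. Replication does not change the sup-norm, so $\|\bdelta\|_\infty=\|\bu^\star\|_\infty\le\epsilon$. It remains to show $\Rs\Us\bu^\star=\bu^\star$. For the box kernel at integer $s$ this is exact: averaging a constant block returns the constant. This case is the one the statement is really about.

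\emph{Main obstacle.} The identity $\Rs\Us=I$ fails for general nonnegative kernels, such as bilinear or bicubic with wider support, and for non-integer $s$. There $\Rs\Us$ is a row-stochastic smoothing, so the argument gives the inequality but not attainment. I would state attainment for the box/area kernel at integer $s$ with aligned grids, and note that other kernels hold up to the smoothing $\Rs\Us$. Alternatively, one can invoke \cref{as:transfer}: low working-grid frequencies near $\nu^\star$ pass $\Rs\Us$ almost unchanged, so attainment holds approximately. A further issue is that nonnegativity is essential to the first step. Lanczos and bicubic kernels have negative lobes, their row $\ell_1$ norms exceed $1$, and the bound degrades to $\|\Rs\|_{\infty\to\infty}\,\|\bdelta\|_\infty$. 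This is why the statement is restricted to nonnegative kernels.
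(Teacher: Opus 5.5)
Your proposal is correct and follows the paper's proof step for step. It uses the convex-combination bound on each output pixel, reduces the deviation to the working-resolution perturbation $\bu=\Rs\bdelta$ in the $\epsilon$-ball, and gets attainment from $\Rs\Us\bu=\bu$ and $\|\Us\bu\|_\infty=\|\bu\|_\infty$ for the box kernel at integer $s$, the only case where the paper's proof also claims attainment. Your remark that the reduction is exact because $\Rs$ is linear is a valid minor sharpening (the paper uses \cref{as:lin} only to phrase the bound as $\max_{\|\bu\|_\infty\le\epsilon}\|J_s\bu\|$); just note that bicubic, which you first list among the nonnegative kernels, has negative lobes, as you say a few lines later.
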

\begin{proof}
For a kernel $k_s\ge0$ with $\sum k_s=1$, each output pixel of $\Rs\bdelta$ is a convex combination of input pixels, so $\|\Rs\bdelta\|_\infty\le\|\bdelta\|_\infty\le\epsilon$.
Under \cref{as:lin}, the deviation caused by $\bdelta$ at scale $s$ is the deviation caused by the working-resolution perturbation $\bu=\Rs\bdelta$, which lies in the $\ell_\infty$ ball of radius $\epsilon$; hence it cannot exceed $\max_{\|\bu\|_\infty\le\epsilon}\|J_s\bu\|$.
For integer $s$ and the box kernel, nearest-neighbor upsampling $\Us$ satisfies $\Rs\Us\bu=\bu$ and $\|\Us\bu\|_\infty=\|\bu\|_\infty$, so $\bdelta=\Us\bu^\star$ attains the maximum.
\end{proof}

\begin{theorem}[Window]\label{thm:window}
A perturbation computed at scale $s_0$ and upsampled has $\rho_s=g(\nu^\star s/s_0)^2/g^{\star2}$ for $s\le s_0/\nu^\star$ and $\rho_s=0$ beyond.
\end{theorem}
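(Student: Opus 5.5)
The plan is to model the protection computed at scale $s_0$ as the per-scale optimum there, then track where its spectrum lands on the working grid at any other scale $s$. Under \cref{as:transfer}, the optimum at $s_0$, i.e.\ the one attaining $\opt_{s_0}=g^\star\epsilon$, puts all of its energy $\epsilon^2$ at working-grid radial frequency $\nu^\star$. After upsampling to the native grid (nearest-neighbor, as in \cref{thm:lift}, idealized as band-limited interpolation), this component sits at native frequency $|\omega|=\nu^\star/s_0$. The normalized density is therefore $p(\omega)$ concentrated on the circle $|\omega|=\nu^\star/s_0$, with full energy $\overline{\delta^2}=\epsilon^2$. This gives equality in the expression for $\rho_s$ stated after \eqref{eq:spectral}.

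Next, substitute this $p$ into $\rho_s=\int_{|\omega|\le1/s}p(\omega)\,g(s|\omega|)^2/g^{\star2}\,d\omega$ and split into two cases.
\begin{itemize}
\item If $\nu^\star/s_0\le 1/s$, i.e.\ $s\le s_0/\nu^\star$, the whole mass lies inside the surviving disk. It is mapped to working-grid frequency $s\nu^\star/s_0$, so the integral evaluates to $g(\nu^\star s/s_0)^2/g^{\star2}$.
\item If $s>s_0/\nu^\star$, the mass lies outside $|\omega|\le1/s$, the ideal antialiasing filter removes it, and $\rho_s=0$.
\end{itemize}
At $s=s_0$ the first case gives $\rho_{s_0}=1$, which checks consistency with optimality.

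The main obstacle is justifying the concentration claim in the first paragraph and the idealizations around it. Under \cref{as:transfer} with the $\ell_2$-energy constraint of the spectral model, the per-scale optimum is any density supported on the argmax set of $g$, so I would take the optimum to be the concentrated one (or, if $g$ has several maximizers, state the result for the perturbation supported at $\nu^\star$). Nearest-neighbor upsampling also creates spectral images above $1/s_0$. I would argue these are either removed by the resize at $s\ge s_0$, or treated as negligible by idealizing the upsampler as band-limited. I would state that idealization explicitly, in the same spirit as the ideal antialiased downscaling already adopted in the spectral model.

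Finally, the ``window'' reading follows directly. Since $g$ peaks only at $\nu^\star$, $\rho_s$ stays near $1$ only while $\nu^\star s/s_0$ stays near $\nu^\star$, that is, for $s$ in a narrow neighborhood of $s_0$. It drops as $s$ moves away in either direction and vanishes beyond $s_0/\nu^\star$.
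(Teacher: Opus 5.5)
Your proof is correct and follows the paper's own argument. The optimum at $s_0$ puts its energy at working frequency $\nu^\star$, hence at native frequency $\nu^\star/s_0$, which survives the resize at scale $s$ iff $s\le s_0/\nu^\star$ and then sits at working frequency $s\nu^\star/s_0$. Your stated idealizations (taking the concentrated optimizer when $g$ has several maximizers, and treating the upsampler as band-limited so that nearest-neighbor spectral images can be ignored for $s<s_0$) are assumptions the paper's proof makes without saying so; making them explicit sharpens the same argument rather than changing it.
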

\begin{proof}
Computing the protection at scale $s_0$ and attaining $\opt_{s_0}$ places the energy at working frequency $\nu^\star$ of the $s_0$ grid, i.e.\ at native frequency $\nu^\star/s_0$.
At scale $s$ this component survives iff $\nu^\star/s_0\le1/s$, and then sits at working frequency $s\nu^\star/s_0$, giving $\rho_s=g(\nu^\star s/s_0)^2/g^{\star2}$.
\end{proof}

Protecting at a fixed working resolution is therefore optimal at one scale and decays on both sides of it.
For a low-pass $g$, \cref{thm:window} predicts that a protection computed at one megapixel is strongest at the smallest downscaling, where its content sits at low working-grid frequencies, and decays as $s$ grows.
The published protections applied at one megapixel behave in this way: for PCA@1MP, the edit LPIPS on FLUX.2-klein is 0.189, 0.163, 0.129, 0.094, and 0.077 at $s=2,3,4,6,8$ (\cref{tab:perscale-klein}).

\subsection{The Limitations of Averaging over Scales}
\label{sec:rev-eot}
The standard remedy, EOT~\cite{athalye2018eot}, draws the scale at random and maximizes the average protection.
It relies on the latent assumption that protecting the average scale protects every scale.

\begin{theorem}[EOT is not minimax]\label{thm:eot}
The maximizer of the average of $\rho_s$ over log-uniform $s$ concentrates its energy at a single native frequency $\omega^\dagger$.
Its worst case is $\min_s g(s\omega^\dagger)^2/g^{\star2}$, which tends to $0$ when $\ln S$ exceeds the log-width of the support of $g$, where $S=s_{\max}/s_{\min}$.
\end{theorem}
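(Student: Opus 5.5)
The plan is to reduce the EOT objective to a linear functional of the normalized spectral density $p$, so that a maximizer can be chosen at an extreme point. By the spectral model \eqref{eq:spectral} and the definition of $\rho_s$, at full energy $\overline{\delta^2}=\epsilon^2$ we have
\begin{equation*}
\mathbb{E}_s[\rho_s]=\int p(\omega)\,K(|\omega|)\,d\omega,\qquad K(r)=\mathbb{E}_s\!\left[\frac{g(sr)^2}{g^{\star2}}\,\mathbf{1}\{sr\le1\}\right],
\end{equation*}
where the expectation is over $\ln s$ uniform on $[\ln s_{\min},\ln s_{\max}]$. Reducing the energy only scales this down, so the maximizer uses full energy. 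The feasible set of $p$ (densities with $\int p=1$) is convex, and the objective is linear in $p$. The supremum therefore equals $\sup_r K(r)$ and is approached, or attained in the limit, by densities concentrating on the radial shell $|\omega|=\omega^\dagger\in\arg\max_r K(r)$. I would state the theorem's ``concentrates at a single native frequency'' in this sense: every maximizer is supported on $\arg\max K$. Generically this set is one radius, and when it is not, any single point of it still gives an optimal EOT solution.

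Next I would read off the worst case of that concentrated solution. With $p=\delta_{|\omega|=\omega^\dagger}$, the integral collapses to $\rho_s=g(s\omega^\dagger)^2/g^{\star2}$ whenever $s\omega^\dagger\le1$, and to $0$ otherwise. This is exactly the single-component calculation in the proof of \cref{thm:window}. Taking the minimum over $s\in[s_{\min},s_{\max}]$ gives the claimed worst case $\min_s g(s\omega^\dagger)^2/g^{\star2}$, with $g$ extended by zero outside $[\nu_0,1]$ to account for cutoff and for frequencies below the working grid.

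For the limit, I would pass to log coordinates. As $s$ ranges over $[s_{\min},s_{\max}]$, the working frequency $\nu=s\omega^\dagger$ sweeps an interval of log-length $\ln S$. Suppose the support $\{\nu:g(\nu)>0\}$, or more robustly $\{g\ge\eta g^\star\}$, has log-width $L$. Once $\ln S>L$, the swept interval cannot lie inside the support, so some $s$ puts $\nu$ outside it and the minimum is $0$ (respectively below $\eta^2$). The conclusion holds for every $\omega^\dagger$, so it does not depend on knowing where the argmax of $K$ lies. The ``tends to $0$'' phrasing then follows by letting $\eta\to0$ as $S$ grows, assuming $g$ is continuous and decays at the ends of its band. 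For contrast, I would note that a spread density can keep a positive minimum, which previews the logarithmic rate in \cref{sec:method-theory}, but this is not needed for the statement.

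The main obstacle is the first step: making ``the maximizer concentrates'' rigorous. A linear functional over densities need not attain its supremum, and ties in $\arg\max K$ break uniqueness. I would first handle this by working with spectral measures rather than densities, using weak-* compactness of probability measures on the unit disk and upper semicontinuity of $K$ from continuity of $g$. The measure maximizers are then exactly those supported on $\arg\max K$. If $g$ has discontinuities, I would instead state the result for near-maximizers, which still concentrate on near-argmax radii, and apply the worst-case argument to them.
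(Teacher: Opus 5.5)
Your proposal is correct and follows the paper's proof essentially step for step. You write the log-average of $\rho_s$ as the linear functional $\int p\,K$ (the paper's $a(\omega)$), take its maximizer as a point mass at $\omega^\dagger\in\arg\max K$ with worst case $\min_s g(s\omega^\dagger)^2/g^{\star2}$, and show this vanishes once the log-sweep $\ln S$ exceeds the log-width of the support of $g$, with the same relaxation for $g$ small but positive outside a band. Your weak-$*$ compactness argument over spectral measures, and your care about the radial shell and ties in $\arg\max K$, only make rigorous what the paper asserts directly. The paper's extra closing remark on practical EOT averaging $D_s$ rather than $\rho_s$ is not needed for the statement.
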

\begin{proof}
The log-average of $\rho_s$ equals $\int p(\omega)a(\omega)\,d\omega$ with $a(\omega)=\frac{1}{\ln S}\int g(s|\omega|)^2/g^{\star2}\,\mathbf 1[s|\omega|\le1]\,ds/s$, which is linear in $p$; over the simplex it is maximized by a point mass at $\omega^\dagger\in\arg\max a$.
For this perturbation $\rho_s=g(s\omega^\dagger)^2/g^{\star2}$.
If $g$ vanishes outside $[\nu_c,\nu_d]$ and $\ln S>\ln(\nu_d/\nu_c)$, then $s\omega^\dagger$ leaves $[\nu_c,\nu_d]$ for some $s$ in the range, and the worst case is $0$.
When $g$ is positive but small outside a band, the same argument gives a worst case bounded by the value of $g^2/g^{\star2}$ outside the band.
EOT in practice averages $D_s$ rather than $\rho_s=D_s^2/\opt_s^2$.
Since $D_s$ is concave in $p$, the maximizer is no longer a point mass.
By Jensen's inequality, however, it still weights scales by their average gain and does not equalize the worst case.
\end{proof}

EOT therefore sacrifices the scales at the ends of the range.
With our objective, EOT is flat in the middle of the range on FLUX.2-klein (0.473 to 0.518 for $s\le6$) but drops to 0.382 at $s=8$, the largest downscaling.

The above analyses reveal that the root cause is the same in all three cases: each protection is optimized for a scale, or for an average of scales, that does not represent the scale the editor uses.
This motivates us to optimize the protection for the scale at which it is currently weakest, rather than for one expected scale or for the average.

\section{\method: Worst-Case Scale-Robust Protection}
\label{sec:method}

\begin{figure}[!t]
\centering
\includegraphics[width=\columnwidth]{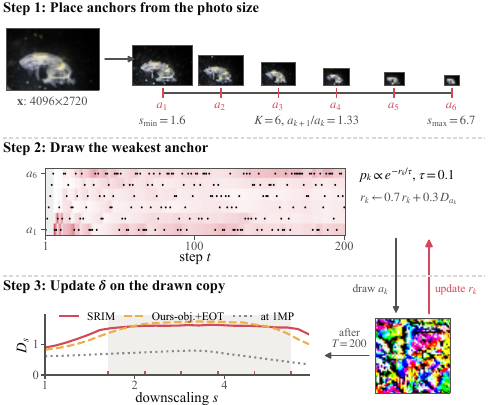}
\caption{\method on one test photo (FLUX.2 VAE, $\epsilon=4/255$); in general, it consists of three steps.
Step 1: Place anchors from the photo size. The photo size sets the range $[s_{\min},s_{\max}]$, which $K=6$ geometric anchors cover; copies are drawn to scale.
Step 2: Draw the weakest anchor. Shading gives $p_k$ at each step (grey: the first $K$ steps visit every anchor once), and dots mark the drawn anchor; the two ends are drawn most often.
Step 3: Update $\bdelta$ on the drawn copy. A sign-gradient step on that copy updates $\bdelta$ (center crop, amplified) and $r_k$.
After $T=200$ steps, the worst latent deviation $D_s$ over the range (shaded) is 1.54, against 1.37 for EOT with our objective and 0.44 for our objective at one megapixel.}
\label{fig:pipeline}
\end{figure}

The analysis of \cref{sec:theory} suggests a different starting point: rather than protecting the scale that the editor is expected to use, or the average of all scales, a protection should be optimized for the scale at which it is currently weakest.
This step is not obvious because existing protections treat the resize like any other random augmentation, and a direct minimax over scales would evaluate every scale at every step.
As shown in \cref{fig:pipeline}, \method makes the worst case affordable with three components, which meet requirements (i) to (iii) of \cref{sec:intro} in turn.

\paragraph{Anchors cover the range (i)}
We place anchors $a_1<\dots<a_K$ on a geometric grid with ratio $1.4$ over the range $[s_{\min},s_{\max}]$ that the photo size implies (\cref{sec:setup}); a 12-megapixel photo has $K=6$.
At each step, we perturb the drawn anchor by a random factor in $[e^{-0.1},e^{0.1}]$ and draw the resampling kernel uniformly from bicubic, bilinear, and area filters, so the protection overfits neither to a grid of scales nor to one kernel.
A natural alternative is to draw the scale log-uniformly, as EOT does; however, the ends of the range then receive little weight and are the first to fail (\cref{thm:eot}).
With anchors at both ends, even uniform sampling over anchors gives the extreme scales the same weight as the middle ones.

\paragraph{Weakness tracking protects the worst scale (ii)}
For each anchor we keep an exponential moving average $r_k$ of the deviation $D_{a_k}$ observed when it was last drawn.
Each anchor is drawn once at the start; afterwards anchor $k$ is drawn with probability $\propto\exp(-r_k/\tau)$.
A natural alternative is a direct minimax that evaluates all $K$ anchors at every step; it reaches the same objective but multiplies the cost by $K$ and by the pixel count of the larger anchors.
Low temperature $\tau$ approaches the minimax objective, while high temperature samples the anchors uniformly, and \cref{sec:exp-analysis} shows that the result is insensitive to $\tau$.
\Cref{alg:method} summarizes the procedure.

\begin{algorithm}[!t]
\caption{\method}\label{alg:method}
\begin{algorithmic}[1]
\REQUIRE photo $\bx$, encoder $\Enc$, budget $\epsilon$, steps $T$, temperature $\tau$
\STATE anchors $a_{1..K}$ from the photo size; $\bdelta\sim\mathcal U[-\epsilon,\epsilon]$; $r_k\gets$ unset
\FOR{$t=1$ \TO $T$}
  \STATE $k\gets$ an unseen anchor if any, else draw $k$ with $p_k\propto e^{-r_k/\tau}$
  \STATE $s\gets a_k e^{u}$, $u\sim\mathcal U[-0.1,0.1]$; draw kernel $\kappa$
  \STATE $d\gets D_s(\bdelta)$ with resize $\mathcal R_{s,\kappa}$ \hfill\eqref{eq:dev}
  \STATE $r_k\gets d$ if unset, else $0.7\,r_k+0.3\,d$
  \STATE $\bdelta\gets\Pi_\epsilon\bigl(\bdelta+\alpha_t\,\mathrm{sign}\nabla_{\bdelta}d\bigr)$; clip $\bx+\bdelta$ to $[0,1]$
\ENDFOR
\STATE \textbf{return} 8-bit rounding of $\bx+\bdelta$
\end{algorithmic}
\end{algorithm}

\paragraph{One copy per step keeps the cost (iii)}
We maximize the relative latent deviation \eqref{eq:dev} of the editor family's VAE encoder, an untargeted objective that needs neither a target image nor the denoiser.
Because each step encodes one downscaled copy, a step costs less than one encoder pass at the native resolution, and the memory stays below that of encoding a 5.1-megapixel image (the largest anchor with its jitter).
All settings were fixed on a development set before testing: $T=200$ steps, step size $\alpha_t=\epsilon/4$ with cosine decay, $\tau=0.1$, and $\epsilon=4/255$ unless stated otherwise.

\subsection{Theoretical Analysis of \method}
\label{sec:method-theory}
We now ask what the worst case over the scale range can attain at best, using the model of \cref{sec:theory}.
Let $S=s_{\max}/s_{\min}$ and define the log-sensitivity mass $C_g=\int_{\nu_0}^1 g(\nu)^2/g^{\star2}\,d\nu/\nu$, the area under $g^2$ on a logarithmic frequency axis.
The following result justifies covering the range with a geometric grid of anchors.

\providecommand{\MainScales}{}\renewcommand{\MainScales}{FLUX.2-klein 4B: $s\in\{2,3,4,6,8\}$; FLUX.2-klein 9B: $s\in\{2,3,4,6,8\}$; FLUX.2-dev: $s\in\{3,4,6,8\}$; InstructPix2Pix: $s\in\{4,6,8\}$}

\begin{table*}[!t]
\centering
\caption{Protection of full-resolution photos (median 11.3 megapixels) against downscaling by an unknown factor ($\epsilon=4/255$).
Edit LPIPS between the edit of the protected photo and the edit of the clean photo (higher is better), averaged over photos and instructions.
Avg.\ and Worst average or minimize it over the downscaling factors (\MainScales).
``Ours-obj.'' uses our objective without the worst-case scale sampling.
On FLUX.2-dev, PCA@1MP represents the protections applied at one megapixel (--: not evaluated). Native-resolution application of the baselines exceeds the memory of a 48\,GB GPU and is not reported.}
\label{tab:main}
\begin{tabular}{lcccccccc}
\toprule
 & \multicolumn{2}{c}{FLUX.2-klein 4B} & \multicolumn{2}{c}{FLUX.2-klein 9B} & \multicolumn{2}{c}{FLUX.2-dev} & \multicolumn{2}{c}{InstructPix2Pix} \\
\cmidrule(lr){2-3} \cmidrule(lr){4-5} \cmidrule(lr){6-7} \cmidrule(lr){8-9}
Method & Avg. & Worst & Avg. & Worst & Avg. & Worst & Avg. & Worst \\
\midrule
PhotoGuard@1MP & 0.131 & 0.077 & 0.133 & 0.082 & -- & -- & 0.173 & 0.107 \\
PhotoGuard+EOT & 0.240 & 0.171 & 0.224 & 0.161 & 0.221 & 0.157 & 0.218 & 0.163 \\
Mist@1MP & 0.163 & 0.097 & 0.157 & 0.096 & -- & -- & 0.191 & 0.129 \\
Mist+EOT & 0.283 & 0.192 & 0.269 & 0.184 & 0.238 & 0.166 & 0.238 & 0.183 \\
PCA@1MP & 0.130 & 0.077 & 0.130 & 0.080 & 0.108 & 0.072 & 0.183 & 0.115 \\
PCA+EOT & 0.241 & 0.163 & 0.227 & 0.153 & 0.243 & 0.161 & 0.245 & 0.174 \\
Ours-obj.@1MP & 0.144 & 0.075 & 0.138 & 0.078 & -- & -- & 0.220 & 0.152 \\
Ours-obj.+EOT & 0.474 & 0.382 & 0.463 & 0.367 & 0.426 & 0.319 & 0.331 & 0.250 \\
\midrule
\method & \textbf{0.489} & \textbf{0.463} & \textbf{0.479} & \textbf{0.450} & \textbf{0.442} & \textbf{0.386} & \textbf{0.332} & \textbf{0.269} \\
\bottomrule
\end{tabular}

\end{table*}

\begin{figure*}[!t]
\centering
\includegraphics[width=\textwidth]{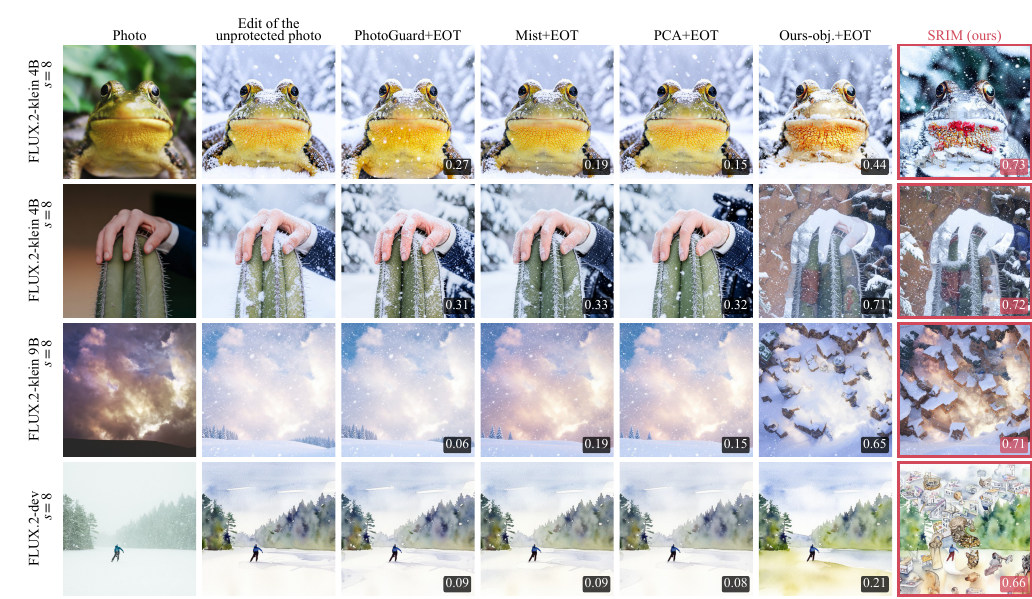}
\caption{Selected examples of real edits by the FLUX.2 editors at the largest downscaling ($s=8$).
The instruction is ``Make it snowy winter'' in rows 1 to 3 and ``Turn this photo into a watercolor painting'' in row 4; cells are center-cropped to squares.
Badges give the edit LPIPS to the edit of the unprotected photo (higher means a stronger disruption).
The published protections with EOT leave the edits close to the edit of the unprotected photo, while \method disrupts them.}
\label{fig:qual}
\end{figure*}

\begin{theorem}[Minimax value]\label{thm:minimax}
For every perturbation, $\min_{s\in[s_{\min},s_{\max}]}\rho_s\le C_g/\ln S$.
Suppose that $g^2\ge g^{\star2}/2$ on a log-frequency interval of width $w$.
Spreading the energy log-uniformly over the native frequencies that this interval occupies across all scales gives $\rho_s\ge w/\bigl(2(\ln S+w)\bigr)$ for every $s$ in the range.
\end{theorem}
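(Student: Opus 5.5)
The upper bound follows from averaging, and the lower bound from an explicit construction.

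\emph{Upper bound.} The minimum over scales is at most any average over scales, so we use the log-uniform average
\[
\min_s\rho_s\le\frac{1}{\ln S}\int_{s_{\min}}^{s_{\max}}\rho_s\,\frac{ds}{s}.
\]
Insert the bound $\rho_s\le\int_{|\omega|\le1/s}p(\omega)\,g(s|\omega|)^2/g^{\star2}\,d\omega$ and exchange the integrals (Tonelli), as in the proof of \cref{thm:eot}. The right-hand side becomes $\int p(\omega)\,a(\omega)\,d\omega$, where
\[
a(\omega)=\frac{1}{\ln S}\int g(s|\omega|)^2/g^{\star2}\,\mathbf 1[\nu_0\le s|\omega|\le1]\,ds/s .
\]
Substituting $\nu=s|\omega|$ gives $ds/s=d\nu/\nu$. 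For fixed $\omega$, the range of $\nu$ is a subinterval of $[\nu_0,1]$, so $a(\omega)\le C_g/\ln S$. Since $\int p=1$, the first claim follows. The one point to check is that $g$ is only defined on $[\nu_0,1]$. I read \eqref{eq:spectral} as giving no contribution below $\nu_0$, which is consistent with \cref{as:transfer}.

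\emph{Lower bound.} Let $g^2\ge g^{\star2}/2$ on $[\nu_1,\nu_1e^{w}]$. As $s$ ranges over $[s_{\min},s_{\max}]$, this working band occupies native radial frequencies in
\[
\Omega=[\nu_1/s_{\max},\ \nu_1e^{w}/s_{\min}],
\]
whose log-width is $\ln S+w$. Take full energy, $\overline{\delta^2}=\epsilon^2$, and choose $p$ radially log-uniform on $\Omega$. That is, the mass of the shell at radius $r$ is $dr/(r(\ln S+w))$; in 2D the density is $p(\omega)=1/(2\pi|\omega|^2(\ln S+w))$.

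Fix $s$. The native band mapped onto $[\nu_1,\nu_1e^w]$ is $[\nu_1/s,\nu_1e^w/s]$. It lies inside $\Omega$ because $s\in[s_{\min},s_{\max}]$, and inside $|\omega|\le1/s$ because $\nu_1e^w\le1$. On this band $g(s|\omega|)^2/g^{\star2}\ge1/2$, so
\[
\rho_s\ge\tfrac12\,\frac{w}{\ln S+w}=\frac{w}{2(\ln S+w)} .
\]
This holds for every $s$ in the range.

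\emph{Main obstacle.} The delicate step is realizability of the construction. Its low-frequency end $\nu_1/s_{\max}$ must stay above the native grid's lowest frequency, and its high end must stay below Nyquist. I would simply assume both, in the same idealized spectral model the paper uses. I would also remark that the construction is posed in mean energy rather than $\ell_\infty$; the paper's energy normalization $\overline{\delta^2}\le\epsilon^2$ already adopts this relaxation. Comparing the two bounds then shows the minimax value is $\Theta(1/\ln S)$, which is the logarithmic degradation claimed in the abstract.
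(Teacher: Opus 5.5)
Your proposal is correct and follows the paper's proof essentially step for step. The upper bound uses the log-uniform average over $s$, where the substitution $\nu=s|\omega|$ bounds the inner integral by $C_g/\ln S$; the lower bound spreads the energy radially log-uniformly on $\Omega=[\nu_1/s_{\max},\nu_1e^{w}/s_{\min}]$, so that each scale captures a fraction $w/(\ln S+w)$ of it with gain at least $1/2$. Your explicit 2D density, the check that the band lies in $|\omega|\le 1/s$, and the realizability caveats are details the paper leaves implicit.
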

\begin{proof}
\emph{Upper bound.}
Average $\rho_s$ over $\ln s$ uniform on $[\ln s_{\min},\ln s_{\max}]$:
\begin{align*}
\frac{1}{\ln S}\int_{s_{\min}}^{s_{\max}}\rho_s\frac{ds}{s}
&\le\int p(\omega)\,A(\omega)\,d\omega\\
&\le\int p(\omega)\,\frac{C_g}{\ln S}\,d\omega=\frac{C_g}{\ln S},
\end{align*}
with $A(\omega)=\frac{1}{\ln S}\int_{s_{\min}}^{s_{\max}}\frac{g(s|\omega|)^2}{g^{\star2}}\mathbf 1[s|\omega|\le1]\frac{ds}{s}$;
the bound $A(\omega)\le C_g/\ln S$ follows by substituting $\nu=s|\omega|$ ($ds/s=d\nu/\nu$) and extending the range to $[\nu_0,1]$.
The minimum over $s$ is at most the average.

\emph{Construction.}
Let $[\nu_a,\nu_b]$ with $\ln(\nu_b/\nu_a)=w$ be an interval on which $g^2\ge g^{\star2}/2$.
Spread the energy log-uniformly over radial native frequencies $\Omega=[\nu_a/s_{\max},\nu_b/s_{\min}]$, whose log-width is $L=w+\ln S$.
For any $s\in[s_{\min},s_{\max}]$, the native frequencies that land in $[\nu_a,\nu_b]$ form the interval $[\nu_a/s,\nu_b/s]\subseteq\Omega$, of log-width $w$, so they carry a fraction $w/L$ of the energy, each with $g^2\ge g^{\star2}/2$.
Hence $\rho_s\ge w/(2L)$.
\end{proof}

\Cref{thm:minimax} indicates that the best worst-case protection degrades only logarithmically with the range of scales, and that some loss relative to the per-scale optimum is unavoidable once $\ln S$ exceeds the width of the sensitive band.
The construction that attains the bound gives every part of the log-scale range the same weight, which is what the geometric anchor grid of \method does, while weakness tracking moves effort toward the scales that fall behind.
By \cref{thm:eot}, averaging over scales does not reach this value; this is the gap that \method closes in practice (\cref{sec:exp-main}).

\paragraph{Relation to prior work}
Similar techniques have been explored in EOT~\cite{athalye2018eot} and EOLT~\cite{yao2025eolt}, which sample or learn a distribution over transformations; the key difference is that both maximize the average over that distribution, whereas \method targets the weakest scale.
\method borrows the exponentiated group weights of group distributionally robust optimization~\cite{sagawa2020dro}; here the groups are anchor scales derived from the photo size, and each weight is updated from the single copy drawn at that step.

\section{Experiments}
\label{sec:exp}

\begin{figure}[!tb]
\centering
\includegraphics[width=\columnwidth]{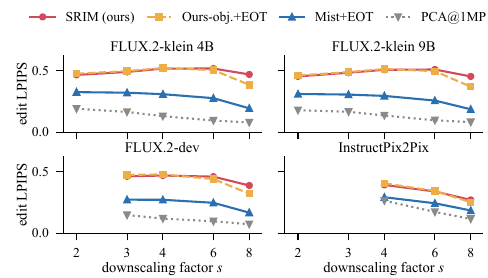}
\caption{Edit LPIPS at each downscaling factor on the four editors (mean over photos and instructions, the records of \cref{tab:main}).
PCA@1MP stands for the published protections applied at one megapixel, and Mist+EOT for the strongest published protection with EOT.}
\label{fig:perscale}
\end{figure}

\begin{table}[!tb]
\centering
\caption{FLUX.2-klein 4B, per downscaling factor (160 photos, two instructions). LPIPS, higher is better; PSNR, SSIM, and CLIP-I between the two edits, lower is better.}
\label{tab:perscale-klein}
\resizebox{\columnwidth}{!}{\begin{tabular}{lcccccccc}
\toprule
 & \multicolumn{5}{c}{LPIPS $\uparrow$ at $s=$} & \multicolumn{3}{c}{mean over $s$} \\
\cmidrule(lr){2-6} \cmidrule(lr){7-9}
Method & 2 & 3 & 4 & 6 & 8 & PSNR $\downarrow$ & SSIM $\downarrow$ & CLIP-I $\downarrow$ \\
\midrule
PhotoGuard@1MP & 0.186 & 0.164 & 0.131 & 0.097 & 0.077 & 23.98 & 0.863 & 0.981 \\
PhotoGuard+EOT & 0.281 & 0.267 & 0.260 & 0.224 & 0.171 & 20.11 & 0.770 & 0.951 \\
Mist@1MP & 0.222 & 0.201 & 0.168 & 0.128 & 0.097 & 22.65 & 0.835 & 0.977 \\
Mist+EOT & 0.323 & 0.319 & 0.307 & 0.274 & 0.192 & 19.46 & 0.744 & 0.949 \\
PCA@1MP & 0.189 & 0.163 & 0.129 & 0.094 & 0.077 & 23.97 & 0.864 & 0.981 \\
PCA+EOT & 0.289 & 0.276 & 0.258 & 0.220 & 0.163 & 19.77 & 0.769 & 0.954 \\
Ours-obj.@1MP & 0.211 & 0.185 & 0.148 & 0.100 & 0.075 & 23.42 & 0.850 & 0.980 \\
Ours-obj.+EOT & \textbf{0.473} & \textbf{0.494} & \textbf{0.518} & 0.502 & 0.382 & 15.96 & 0.581 & 0.871 \\
\midrule
\method & 0.463 & 0.487 & 0.514 & \textbf{0.515} & \textbf{0.466} & \textbf{15.60} & \textbf{0.566} & \textbf{0.858} \\
\bottomrule
\end{tabular}
}
\end{table}

\begin{table}[!tb]
\centering
\caption{FLUX.2-klein 9B, per downscaling factor (160 photos, two instructions). LPIPS, higher is better; PSNR, SSIM, and CLIP-I between the two edits, lower is better.}
\label{tab:perscale-klein9b}
\resizebox{\columnwidth}{!}{\begin{tabular}{lcccccccc}
\toprule
 & \multicolumn{5}{c}{LPIPS $\uparrow$ at $s=$} & \multicolumn{3}{c}{mean over $s$} \\
\cmidrule(lr){2-6} \cmidrule(lr){7-9}
Method & 2 & 3 & 4 & 6 & 8 & PSNR $\downarrow$ & SSIM $\downarrow$ & CLIP-I $\downarrow$ \\
\midrule
PhotoGuard@1MP & 0.180 & 0.167 & 0.136 & 0.100 & 0.082 & 24.52 & 0.828 & 0.982 \\
PhotoGuard+EOT & 0.259 & 0.249 & 0.242 & 0.208 & 0.161 & 21.10 & 0.748 & 0.962 \\
Mist@1MP & 0.206 & 0.195 & 0.168 & 0.123 & 0.096 & 23.53 & 0.805 & 0.980 \\
Mist+EOT & 0.308 & 0.304 & 0.293 & 0.255 & 0.184 & 19.97 & 0.716 & 0.958 \\
PCA@1MP & 0.177 & 0.164 & 0.134 & 0.095 & 0.080 & 24.67 & 0.831 & 0.983 \\
PCA+EOT & 0.267 & 0.260 & 0.247 & 0.205 & 0.153 & 20.71 & 0.738 & 0.965 \\
Ours-obj.@1MP & 0.193 & 0.177 & 0.142 & 0.102 & 0.078 & 24.40 & 0.817 & 0.982 \\
Ours-obj.+EOT & \textbf{0.458} & \textbf{0.486} & \textbf{0.511} & 0.493 & 0.367 & 16.28 & 0.537 & 0.881 \\
\midrule
\method & 0.451 & 0.481 & 0.504 & \textbf{0.507} & \textbf{0.450} & \textbf{15.89} & \textbf{0.522} & \textbf{0.868} \\
\bottomrule
\end{tabular}
}
\end{table}

\begin{table}[!tb]
\centering
\caption{FLUX.2-dev, per downscaling factor (160 photos, two instructions). LPIPS, higher is better; PSNR, SSIM, and CLIP-I between the two edits, lower is better.}
\label{tab:perscale-dev}
\resizebox{\columnwidth}{!}{\begin{tabular}{lccccccc}
\toprule
 & \multicolumn{4}{c}{LPIPS $\uparrow$ at $s=$} & \multicolumn{3}{c}{mean over $s$} \\
\cmidrule(lr){2-5} \cmidrule(lr){6-8}
Method & 3 & 4 & 6 & 8 & PSNR $\downarrow$ & SSIM $\downarrow$ & CLIP-I $\downarrow$ \\
\midrule
PhotoGuard+EOT & 0.244 & 0.251 & 0.230 & 0.157 & 20.41 & 0.822 & 0.955 \\
Mist+EOT & 0.271 & 0.270 & 0.245 & 0.166 & 19.80 & 0.809 & 0.953 \\
PCA@1MP & 0.147 & 0.118 & 0.096 & 0.072 & 25.29 & 0.904 & 0.979 \\
PCA+EOT & 0.276 & 0.280 & 0.254 & 0.161 & 19.57 & 0.802 & 0.949 \\
Ours-obj.+EOT & \textbf{0.471} & \textbf{0.475} & 0.438 & 0.319 & 15.91 & 0.669 & 0.889 \\
\midrule
\method & 0.460 & 0.466 & \textbf{0.457} & \textbf{0.386} & \textbf{15.58} & \textbf{0.655} & \textbf{0.882} \\
\bottomrule
\end{tabular}
}
\end{table}

\begin{table}[!tb]
\centering
\caption{InstructPix2Pix, per downscaling factor (160 photos, two instructions). LPIPS, higher is better; PSNR, SSIM, and CLIP-I between the two edits, lower is better.}
\label{tab:perscale-ip2p}
\resizebox{\columnwidth}{!}{\begin{tabular}{lcccccc}
\toprule
 & \multicolumn{3}{c}{LPIPS $\uparrow$ at $s=$} & \multicolumn{3}{c}{mean over $s$} \\
\cmidrule(lr){2-4} \cmidrule(lr){5-7}
Method & 4 & 6 & 8 & PSNR $\downarrow$ & SSIM $\downarrow$ & CLIP-I $\downarrow$ \\
\midrule
PhotoGuard@1MP & 0.249 & 0.164 & 0.107 & 24.27 & 0.790 & 0.960 \\
PhotoGuard+EOT & 0.266 & 0.226 & 0.163 & 22.01 & 0.744 & 0.948 \\
Mist@1MP & 0.263 & 0.180 & 0.129 & 23.85 & 0.771 & 0.952 \\
Mist+EOT & 0.291 & 0.242 & 0.183 & 22.15 & 0.724 & 0.942 \\
PCA@1MP & 0.262 & 0.172 & 0.115 & 24.17 & 0.783 & 0.960 \\
PCA+EOT & 0.306 & 0.256 & 0.174 & 21.70 & 0.732 & 0.948 \\
Ours-obj.@1MP & 0.296 & 0.212 & 0.152 & 23.46 & 0.742 & 0.952 \\
Ours-obj.+EOT & \textbf{0.403} & \textbf{0.341} & 0.250 & \textbf{21.12} & \textbf{0.638} & 0.934 \\
\midrule
\method & 0.390 & 0.337 & \textbf{0.269} & 21.18 & 0.640 & \textbf{0.932} \\
\bottomrule
\end{tabular}
}
\end{table}

\begin{figure}[!tb]
\centering
\includegraphics[width=\columnwidth]{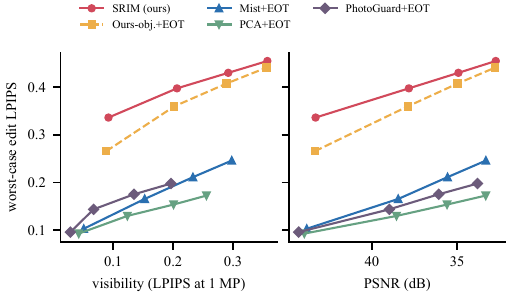}
\caption{Protection against visibility on FLUX.2-klein (40 test photos, $\epsilon\in\{2,4,6,8\}/255$): worst-case edit LPIPS over $s\in\{2,4,8\}$ against the LPIPS to the clean photo at one megapixel (left) and the PSNR at the native resolution (right).}
\label{fig:tradeoff}
\end{figure}

\begin{table*}[!t]
\centering
\caption{Budget sweep on FLUX.2-klein 4B (40 test photos, $s\in\{2,4,8\}$). Edit LPIPS (higher is better) and PSNR of the protected photo in dB.}
\label{tab:sweep}
\begin{tabular}{lcccccccccccc}
\toprule
 & \multicolumn{3}{c}{$\epsilon=2/255$} & \multicolumn{3}{c}{$\epsilon=4/255$} & \multicolumn{3}{c}{$\epsilon=6/255$} & \multicolumn{3}{c}{$\epsilon=8/255$} \\
\cmidrule(lr){2-4} \cmidrule(lr){5-7} \cmidrule(lr){8-10} \cmidrule(lr){11-13}
Method & Avg. & Worst & PSNR & Avg. & Worst & PSNR & Avg. & Worst & PSNR & Avg. & Worst & PSNR \\
\midrule
PhotoGuard+EOT & 0.146 & 0.096 & 44.3 & 0.195 & 0.144 & 39.0 & 0.226 & 0.175 & 36.1 & 0.247 & 0.198 & 33.8 \\
Mist+EOT & 0.175 & 0.103 & 43.8 & 0.242 & 0.166 & 38.5 & 0.288 & 0.211 & 35.6 & 0.321 & 0.246 & 33.3 \\
PCA+EOT & 0.155 & 0.093 & 44.0 & 0.191 & 0.130 & 38.6 & 0.216 & 0.154 & 35.6 & 0.234 & 0.173 & 33.3 \\
Ours-obj.+EOT & 0.334 & 0.266 & 43.4 & 0.408 & 0.360 & 37.9 & 0.443 & 0.408 & 35.0 & 0.468 & 0.441 & 32.8 \\
\midrule
\method & \textbf{0.356} & \textbf{0.336} & 43.3 & \textbf{0.429} & \textbf{0.398} & 37.8 & \textbf{0.461} & \textbf{0.430} & 34.9 & \textbf{0.482} & \textbf{0.455} & 32.7 \\
\bottomrule
\end{tabular}

\end{table*}

\begin{table}[!tb]
\centering
\caption{Visibility of the protection at $\epsilon=4/255$ (40 test photos): PSNR and SSIM between the protected and the clean photo at the native resolution, and LPIPS between them after downscaling both to one megapixel.}
\label{tab:visibility}
\resizebox{\columnwidth}{!}{\begin{tabular}{lccc}
\toprule
Method & PSNR (native) $\uparrow$ & SSIM (native) $\uparrow$ & LPIPS at 1\,MP $\downarrow$ \\
\midrule
PhotoGuard@1MP & 39.26 & 0.966 & 0.087 \\
PhotoGuard+EOT & 39.20 & 0.956 & 0.067 \\
Mist@1MP & 39.11 & 0.965 & 0.100 \\
Mist+EOT & 38.64 & 0.955 & 0.153 \\
PCA@1MP & 38.98 & 0.963 & 0.117 \\
PCA+EOT & 38.72 & 0.954 & 0.124 \\
Ours-obj.@1MP & 39.32 & 0.965 & 0.111 \\
Ours-obj.+EOT & 38.05 & 0.953 & 0.202 \\
\midrule
\method & 37.98 & 0.954 & 0.207 \\
\bottomrule
\end{tabular}
}
\end{table}

\begin{figure}[!tb]
\centering
\includegraphics[width=\columnwidth]{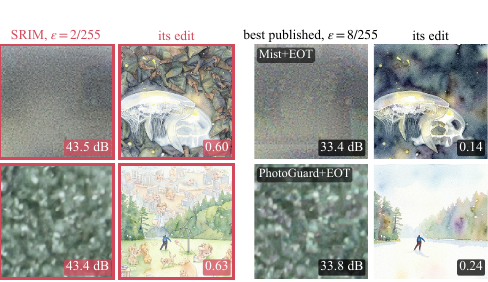}
\caption{Lower visibility, stronger protection: selected examples on FLUX.2-klein at $s=8$.
Each row shows a native-resolution $128\times128$ crop protected by \method at $\epsilon=2/255$ and by the best published protection at $\epsilon=8/255$, each followed by its edit.
Badges give the PSNR of the whole protected photo and the edit LPIPS to the unprotected edit.}
\label{fig:visibility}
\end{figure}

\begin{table}[!tb]
\centering
\caption{Worst-case edit LPIPS on FLUX.2-klein (40 test photos, $s\in\{2,4,8\}$) under different downscaling filters of the Python Imaging Library (top) and under purification of the downscaled input (bottom): JPEG compression, Gaussian blur, Gaussian noise followed by JPEG, and a round trip through the FLUX.2 VAE, the surrogate of all protections.}
\label{tab:robust}
\resizebox{\columnwidth}{!}{\begin{tabular}{lccccc}
\toprule
 & \multicolumn{4}{c}{+EOT} & \\
\cmidrule(lr){3-6}
Processing & \method & Ours-obj. & Mist & PCA & PhotoGuard \\
\midrule
Lanczos (default) & \textbf{0.398} & 0.360 & 0.166 & 0.130 & 0.144 \\
Bicubic & \textbf{0.397} & 0.354 & 0.161 & 0.129 & 0.142 \\
Box & \textbf{0.397} & 0.348 & 0.163 & 0.126 & 0.140 \\
\midrule
JPEG 75 & \textbf{0.150} & 0.120 & 0.101 & 0.086 & 0.095 \\
\midrule
Gaussian blur, $r=1$ & \textbf{0.284} & 0.213 & 0.113 & 0.095 & 0.109 \\
Noise $4/255$ + JPEG 75 & \textbf{0.112} & 0.092 & 0.089 & 0.070 & 0.083 \\
Noise $8/255$ + JPEG 75 & \textbf{0.077} & 0.066 & 0.070 & 0.053 & 0.063 \\
VAE round trip & \textbf{0.400} & 0.340 & 0.143 & 0.120 & 0.133 \\
\bottomrule
\end{tabular}
}
\end{table}

\begin{table}[!tb]
\centering
\caption{Effect of the temperature $\tau$ of the anchor distribution (FLUX.2-klein, 40 test photos, $s\in\{2,4,8\}$).}
\label{tab:tau}
\begin{tabular}{lcc}
\toprule
$\tau$ & Avg. & Worst \\
\midrule
0.03 & 0.428 & 0.395 \\
0.1 (default) & 0.429 & 0.398 \\
0.3 & 0.428 & 0.402 \\
10 & 0.424 & 0.406 \\
EOT & 0.408 & 0.360 \\
\bottomrule
\end{tabular}

\end{table}

\paragraph{Photos}
We sample 216 photos of at least 12 megapixels from the Unsplash Lite dataset, half showing people and half scenes, and resize them to a width of 4096 pixels (median 11.3 megapixels).
Sixteen photos form the development set on which all settings were chosen, 40 were used once to confirm the method, and 160 form the test set.

\paragraph{Editors}
We attack four instruction-guided editors: FLUX.2-klein in its 4B and 9B versions (4 sampling steps), FLUX.2-dev (28 steps), and InstructPix2Pix~\cite{brooks2023ip2p} (50 steps).
The three FLUX.2 editors share one VAE, which serves as the surrogate for all of them, so the same protected photos are evaluated on each; InstructPix2Pix uses its own VAE.
The protected photo is downscaled with the Lanczos filter by $s\in\{2,3,4,6,8\}$ (working widths 2048 to 512 pixels) and edited with two fixed instructions and seeds.
InstructPix2Pix, trained at 512 pixels, works at $s\in\{4,6,8\}$, and FLUX.2-dev at $s\in\{3,4,6,8\}$, where we compare with the EOT variants and PCA@1MP.
With ``Make it snowy winter'', FLUX.2-dev sometimes regenerates the scene instead of editing the photo; we keep these cases for all methods.
Section~S1 of the supplementary material gives implementation details, and Section~S2 gives more edits.

\paragraph{Metrics}
Protection is measured by how far the edit of the protected photo departs from the edit of the clean photo: LPIPS~\cite{zhang2018lpips} (higher is better), PSNR and SSIM~\cite{wang2004ssim} (lower is better), and CLIP image similarity~\cite{radford2021clip} (lower is better).
We report the average over scales and the worst case over scales, the quantity the owner cares about.

\paragraph{Baselines}
We compare with protections that, like ours, need only a public surrogate model and no training: PhotoGuard~\cite{salman2023photoguard} (encoder attack), Mist~\cite{liang2023mist} (textural loss), and PCA~\cite{guo2025pca}, each run on the same VAE as our method.
We do not include protections that attack the denoiser~\cite{liang2023advdm,xue2024sds,shen2025decontext}, which would backpropagate through a transformer of 4 to 32 billion parameters at every step and are tied to one editor, whereas a VAE attack serves every editor that shares the autoencoder.
A user holding a high-resolution photo can apply them in two ways: at about one megapixel with the perturbation upsampled back (\texttt{@1MP}), or at the native resolution, which exceeds the memory of a 48\,GB L40S GPU for 12-megapixel photos.
We also equip each baseline with EOT over the same scale range (\texttt{+EOT}), the standard remedy for transformations, and include EOT with our objective as the strongest such variant.
All methods share $\epsilon=4/255$, 200 steps, and the step-size schedule.

\subsection{Main results}
\label{sec:exp-main}

\Cref{tab:main} reports the main comparison, and \cref{tab:perscale-klein,tab:perscale-klein9b,tab:perscale-dev,tab:perscale-ip2p} break it down by downscaling factor on each editor and add PSNR, SSIM, and CLIP image similarity between the two edits.
Applied at about one megapixel, the published protections reach a worst case of at most 0.097 on the FLUX.2 editors, consistent with the scale window of \cref{thm:window}.
EOT recovers part of the loss, but the best published method with EOT, Mist+EOT, still reaches only 0.192 on FLUX.2-klein 4B, 0.184 on the 9B model, and 0.166 on FLUX.2-dev in the worst case.
\method raises the worst case to 0.463, 0.450, and 0.386 on these three editors, which is 2.3 to 2.4 times the best published protection.
It raises the worst case to 0.269 on InstructPix2Pix and gives the highest average on every editor.
The same protected photos transfer from the 4B model to the 9B model and to FLUX.2-dev, whose transformer is eight times larger; the protection weakens with the size of the denoiser but keeps its margin.
In \cref{fig:qual}, edits under published protections stay close to the unprotected edit, while \method disrupts them.
As \cref{fig:perscale} shows, the ablation without worst-case sampling (Ours-obj.+EOT) matches or slightly exceeds \method in the middle of the range but drops at the extreme scales (worst case 0.382, 0.367, 0.319, and 0.250), as \cref{thm:eot} predicts.

\subsection{Analysis}
\label{sec:exp-analysis}

\paragraph{Protection at equal budget and equal visibility}
At every $\ell_\infty$ budget we tested, \method gives the strongest protection, and it outperforms every published protection even with a quarter of their budget (0.336 at $\epsilon=2/255$ against at most 0.246 at $\epsilon=8/255$; \cref{tab:sweep}).
As \cref{tab:visibility} shows, at $\epsilon=4/255$ all methods have a similar PSNR and SSIM at the native resolution, because all of them use the full $\ell_\infty$ budget.
At equal visibility (\cref{fig:tradeoff}), \method reaches 0.341 at an LPIPS of 0.10 against 0.159 for the best published method, and the gap is similar at equal PSNR.
In \cref{fig:visibility}, \method at a quarter of the budget is less visible (43 against 33\,dB) and still disrupts the edit far more (0.60 and 0.63 against 0.14 and 0.24).

\paragraph{Resampling filters and purification}
As shown in \cref{tab:robust}, the protection is insensitive to the editor's resampling filter.
Under purification, blurring and a round trip through the surrogate VAE leave \method largely intact (0.284 and 0.400) while reducing the published protections to at most 0.143.
Compression removes more (0.053 to 0.077 with noise of $8/255$), but \method remains the strongest in every setting.

\paragraph{Temperature}
Varying $\tau$ over two orders of magnitude changes the worst case by at most 0.011, and even the near-uniform $\tau=10$ stays well above EOT (\cref{tab:tau}), so the gain comes mainly from the anchor grid, which weights the ends of the range like its middle.
\paragraph{Cost}
On one L40S GPU, protecting a test photo (17 megapixels on average) takes 176\,s with \method, 134 to 155\,s for the EOT variants, 59\,s at one megapixel, and 57 minutes at the native resolution.

\section{Conclusion}
\label{sec:conclusion}

We studied the protection of full-resolution photos against editors that first downscale them by an unknown factor.
This resize decides whether a protection works: computed at the native resolution, protection decays with the factor; computed at a fixed resolution, it covers only a window of scales; and averaged over scales by EOT, it leaves the ends of the range exposed.
We proposed \method, which places anchors over the scale range implied by the photo size and spends each step on the weakest one, as cheaply as EOT.
On popular editors, \method gives the strongest worst-case protection, up to 2.4 times the best published one, and outperforms published protections even with a quarter of their $\ell_\infty$ budget.

\bibliographystyle{IEEEtranN}
\bibliography{refs}

@inproceedings{athalye2018eot,
  author       = {Athalye, Anish and
                  Engstrom, Logan and
                  Ilyas, Andrew and
                  Kwok, Kevin},
  title        = {Synthesizing Robust Adversarial Examples},
  booktitle    = {Proceedings of the 35th International Conference on Machine Learning},
  series       = {Proceedings of Machine Learning Research},
  volume       = {80},
  pages        = {284--293},
  publisher    = {PMLR},
  year         = {2018},
}

@inproceedings{brooks2023ip2p,
  author       = {Brooks, Tim and
                  Holynski, Aleksander and
                  Efros, Alexei A.},
  title        = {{InstructPix2Pix}: Learning to Follow Image Editing Instructions},
  booktitle    = {Proceedings of the {IEEE/CVF} Conference on Computer Vision and Pattern
                  Recognition ({CVPR})},
  pages        = {18392--18402},
  publisher    = {IEEE},
  year         = {2023},
  doi          = {10.1109/CVPR52729.2023.01764},
}

@inproceedings{choi2025diffusionguard,
  author       = {Choi, William June Suk and
                  Lee, Kyungmin and
                  Jeong, Jongheon and
                  Xie, Saining and
                  Shin, Jinwoo and
                  Lee, Kimin},
  title        = {{DiffusionGuard}: A Robust Defense Against Malicious Diffusion-based Image
                  Editing},
  booktitle    = {International Conference on Learning Representations ({ICLR})},
  volume       = {2025},
  pages        = {27134--27180},
  year         = {2025},
}

@inproceedings{guo2019lowfreq,
  author       = {Guo, Chuan and
                  Frank, Jared S. and
                  Weinberger, Kilian Q.},
  title        = {Low Frequency Adversarial Perturbation},
  booktitle    = {Proceedings of The 35th Uncertainty in Artificial Intelligence Conference},
  series       = {Proceedings of Machine Learning Research},
  volume       = {115},
  pages        = {1127--1137},
  publisher    = {PMLR},
  year         = {2020},
}

@article{guo2025pca,
  author       = {Guo, Zhongliang and
                  Lei, Chun Tong and
                  Fang, Lei and
                  Zhao, Shuai and
                  Qian, Yifei and
                  Lin, Jingyu and
                  Wang, Zeyu and
                  Chen, Cunjian and
                  Arandjelovi{\'c}, Ognjen and
                  Lau, Chun Pong},
  title        = {A Gray-Box Attack Against Latent Diffusion Model-Based Image Editing by Posterior Collapse},
  journal      = {IEEE Transactions on Information Forensics and Security},
  volume       = {20},
  pages        = {12918--12933},
  year         = {2025},
  doi          = {10.1109/TIFS.2025.3639957},
}

@article{labs2025kontext,
  title={Flux. 1 kontext: Flow matching for in-context image generation and editing in latent space},
  author={Labs, Black Forest and Batifol, Stephen and Blattmann, Andreas and Boesel, Frederic and Consul, Saksham and Diagne, Cyril and Dockhorn, Tim and English, Jack and English, Zion and Esser, Patrick and others},
  journal={arXiv preprint arXiv:2506.15742},
  year={2025}
}

@inproceedings{liang2023advdm,
  author       = {Liang, Chumeng and
                  Wu, Xiaoyu and
                  Hua, Yang and
                  Zhang, Jiaru and
                  Xue, Yiming and
                  Song, Tao and
                  Xue, Zhengui and
                  Ma, Ruhui and
                  Guan, Haibing},
  title        = {Adversarial Example Does Good: Preventing Painting Imitation from Diffusion Models via Adversarial Examples},
  booktitle    = {Proceedings of the 40th International Conference on Machine Learning},
  series       = {Proceedings of Machine Learning Research},
  volume       = {202},
  pages        = {20763--20786},
  publisher    = {PMLR},
  year         = {2023},
}

@article{liang2023mist,
  title={Mist: Towards improved adversarial examples for diffusion models},
  author={Liang, Chumeng and Wu, Xiaoyu},
  journal={arXiv preprint arXiv:2305.12683},
  year={2023}
}

@inproceedings{lipman2023flowmatching,
  author       = {Lipman, Yaron and
                  Chen, Ricky T. Q. and
                  Ben-Hamu, Heli and
                  Nickel, Maximilian and
                  Le, Matt},
  title        = {Flow Matching for Generative Modeling},
  booktitle    = {International Conference on Learning Representations ({ICLR})},
  year         = {2023},
}

@inproceedings{madry2018pgd,
  author       = {Madry, Aleksander and
                  Makelov, Aleksandar and
                  Schmidt, Ludwig and
                  Tsipras, Dimitris and
                  Vladu, Adrian},
  title        = {Towards Deep Learning Models Resistant to Adversarial Attacks},
  booktitle    = {International Conference on Learning Representations ({ICLR})},
  year         = {2018},
}

@inproceedings{nie2022diffpure,
  author       = {Nie, Weili and
                  Guo, Brandon and
                  Huang, Yujia and
                  Xiao, Chaowei and
                  Vahdat, Arash and
                  Anandkumar, Animashree},
  title        = {Diffusion Models for Adversarial Purification},
  booktitle    = {Proceedings of the 39th International Conference on Machine Learning},
  series       = {Proceedings of Machine Learning Research},
  volume       = {162},
  pages        = {16805--16827},
  publisher    = {PMLR},
  year         = {2022},
}

@inproceedings{peebles2023dit,
  author       = {William Peebles and
                  Saining Xie},
  title        = {Scalable Diffusion Models with Transformers},
  booktitle    = {{IEEE/CVF} International Conference on Computer Vision, {ICCV} 2023, Paris,
                  France, October 1-6, 2023},
  pages        = {4172--4182},
  publisher    = {{IEEE}},
  year         = {2023},
  doi          = {10.1109/ICCV51070.2023.00387},
  bibsource    = {dblp computer science bibliography, https://dblp.org}
}

@article{purifyonce2026,
  title={Purify Once, Edit Freely: Breaking Image Protections under Model Mismatch},
  author={Zhao, Qichen and Zhai, Shengfang and Bai, Xinjian and Shen, Qingni and Lin, Qiqi and Gao, Yansong and Wu, Zhonghai},
  journal={arXiv preprint arXiv:2603.13028},
  year={2026}
}

@inproceedings{quiring2020scaling,
  author       = {Quiring, Erwin and
                  Klein, David and
                  Arp, Daniel and
                  Johns, Martin and
                  Rieck, Konrad},
  title        = {Adversarial Preprocessing: Understanding and Preventing Image-Scaling Attacks in Machine Learning},
  booktitle    = {29th {USENIX} Security Symposium ({USENIX} Security 20)},
  pages        = {1363--1380},
  publisher    = {USENIX Association},
  year         = {2020},
}

@inproceedings{radford2021clip,
  author       = {Radford, Alec and
                  Kim, Jong Wook and
                  Hallacy, Chris and
                  Ramesh, Aditya and
                  Goh, Gabriel and
                  Agarwal, Sandhini and
                  Sastry, Girish and
                  Askell, Amanda and
                  Mishkin, Pamela and
                  Clark, Jack and
                  Krueger, Gretchen and
                  Sutskever, Ilya},
  title        = {Learning Transferable Visual Models From Natural Language Supervision},
  booktitle    = {Proceedings of the 38th International Conference on Machine Learning},
  series       = {Proceedings of Machine Learning Research},
  volume       = {139},
  pages        = {8748--8763},
  publisher    = {PMLR},
  year         = {2021},
}

@inproceedings{sagawa2020dro,
  author       = {Sagawa, Shiori and
                  Koh, Pang Wei and
                  Hashimoto, Tatsunori B. and
                  Liang, Percy},
  title        = {Distributionally Robust Neural Networks},
  booktitle    = {International Conference on Learning Representations ({ICLR})},
  year         = {2020},
}

@inproceedings{salman2023photoguard,
  author       = {Salman, Hadi and
                  Khaddaj, Alaa and
                  Leclerc, Guillaume and
                  Ilyas, Andrew and
                  Madry, Aleksander},
  title        = {Raising the Cost of Malicious {AI}-Powered Image Editing},
  booktitle    = {Proceedings of the 40th International Conference on Machine Learning},
  series       = {Proceedings of Machine Learning Research},
  volume       = {202},
  pages        = {29894--29918},
  publisher    = {PMLR},
  year         = {2023},
}

@inproceedings{shan2023glaze,
  author       = {Shan, Shawn and
                  Cryan, Jenna and
                  Wenger, Emily and
                  Zheng, Haitao and
                  Hanocka, Rana and
                  Zhao, Ben Y.},
  title        = {{Glaze}: Protecting Artists from Style Mimicry by Text-to-Image Models},
  booktitle    = {32nd {USENIX} Security Symposium ({USENIX} Security 23)},
  pages        = {2187--2204},
  publisher    = {USENIX Association},
  year         = {2023},
}

@article{shen2025decontext,
  title={DeContext as Defense: Safe Image Editing in Diffusion Transformers},
  author={Shen, Linghui and Cui, Mingyue and Yang, Xingyi},
  journal={arXiv preprint arXiv:2512.16625},
  year={2025}
}

@article{sun2026doprotective,
  title={Do Protective Perturbations Really Protect Portrait Privacy under Real-world Image Transformations?},
  author={Sun, Ruiqing and Yao, Xingshan and Wu, Zhijing and Lan, Tian and Cui, Chenhao and Zhao, Huiyang and Shi, Jialing and Yang, Chen and Mao, Xianling},
  journal={arXiv preprint arXiv:2604.23688},
  year={2026}
}

@inproceedings{vanle2023antidreambooth,
  author       = {Thanh Van Le and
                  Hao Phung and
                  Thuan Hoang Nguyen and
                  Quan Dao and
                  Ngoc N. Tran and
                  Anh Tuan Tran},
  title        = {{Anti-DreamBooth}: Protecting users from personalized text-to-image
                  synthesis},
  booktitle    = {{IEEE/CVF} International Conference on Computer Vision, {ICCV} 2023, Paris,
                  France, October 1-6, 2023},
  pages        = {2116--2127},
  publisher    = {{IEEE}},
  year         = {2023},
  doi          = {10.1109/ICCV51070.2023.00202},
  bibsource    = {dblp computer science bibliography, https://dblp.org}
}

@inproceedings{wang2025facelock,
  author       = {Wang, Hanhui and
                  Zhang, Yihua and
                  Bai, Ruizheng and
                  Zhao, Yue and
                  Liu, Sijia and
                  Tu, Zhengzhong},
  title        = {Edit Away and My Face Will not Stay: Personal Biometric Defense against Malicious Generative Editing},
  booktitle    = {Proceedings of the {IEEE/CVF} Conference on Computer Vision and Pattern
                  Recognition ({CVPR})},
  pages        = {23806--23816},
  publisher    = {IEEE},
  year         = {2025},
  doi          = {10.1109/CVPR52734.2025.02217},
}

@article{wu2025qwenimage,
  author       = {Wu, Chenfei and
                  Li, Jiahao and
                  Zhou, Jingren and
                  Lin, Junyang and
                  Gao, Kaiyuan and
                  Yan, Kun and
                  Yin, Sheng-ming and
                  Bai, Shuai and
                  Xu, Xiao and
                  Chen, Yilei and
                  Chen, Yuxiang and
                  Tang, Zecheng and
                  Zhang, Zekai and
                  Wang, Zhengyi and
                  Yang, An and
                  Yu, Bowen and
                  Cheng, Chen and
                  Liu, Dayiheng and
                  Li, Deqing and
                  Zhang, Hang and
                  Meng, Hao and
                  Wei, Hu and
                  Ni, Jingyuan and
                  Chen, Kai and
                  Cao, Kuan and
                  Peng, Liang and
                  Qu, Lin and
                  Wu, Minggang and
                  Wang, Peng and
                  Yu, Shuting and
                  Wen, Tingkun and
                  Feng, Wensen and
                  Xu, Xiaoxiao and
                  Wang, Yi and
                  Zhang, Yichang and
                  Zhu, Yongqiang and
                  Wu, Yujia and
                  Cai, Yuxuan and
                  Liu, Zenan},
  title        = {{Qwen-Image} Technical Report},
  journal      = {arXiv preprint arXiv:2508.02324},
  year         = {2025},
  eprint       = {2508.02324},
  archiveprefix = {arXiv},
  primaryclass = {cs.CV}
}

@inproceedings{xiao2019scaling,
  author       = {Xiao, Qixue and
                  Chen, Yufei and
                  Shen, Chao and
                  Chen, Yu and
                  Li, Kang},
  title        = {Seeing is Not Believing: Camouflage Attacks on Image Scaling Algorithms},
  booktitle    = {28th {USENIX} Security Symposium ({USENIX} Security 19)},
  pages        = {443--460},
  publisher    = {USENIX Association},
  year         = {2019},
}

@inproceedings{xue2024sds,
  author       = {Xue, Haotian and
                  Liang, Chumeng and
                  Wu, Xiaoyu and
                  Chen, Yongxin},
  title        = {Toward effective protection against diffusion-based mimicry through score
                  distillation},
  booktitle    = {International Conference on Learning Representations ({ICLR})},
  volume       = {2024},
  pages        = {25734--25758},
  year         = {2024},
}

@inproceedings{yang2025antipure,
  author       = {Wenkui Yang and
                  Jie Cao and
                  Junxian Duan and
                  Ran He},
  title        = {Towards Robust Defense Against Customization via Protective Perturbation
                  Resistant to Diffusion-based Purification},
  booktitle    = {{IEEE/CVF} International Conference on Computer Vision, {ICCV} 2025, Honolulu,
                  HI, USA, October 19-25, 2025},
  pages        = {19290--19300},
  publisher    = {{IEEE}},
  year         = {2025},
  doi          = {10.1109/ICCV51701.2025.01793},
  bibsource    = {dblp computer science bibliography, https://dblp.org}
}

@article{yao2025eolt,
  title={Towards Robust Protective Perturbation against DeepFake Face Swapping},
  author={Yao, Hengyang and Li, Lin and Sun, Ke and Qiu, Jianing and Chen, Huiping},
  journal={arXiv preprint arXiv:2512.07228},
  year={2025}
}

@inproceedings{zhang2018lpips,
  author       = {Zhang, Richard and
                  Isola, Phillip and
                  Efros, Alexei A. and
                  Shechtman, Eli and
                  Wang, Oliver},
  title        = {The Unreasonable Effectiveness of Deep Features as a Perceptual Metric},
  booktitle    = {Proceedings of the {IEEE/CVF} Conference on Computer Vision and Pattern
                  Recognition ({CVPR})},
  pages        = {586--595},
  publisher    = {IEEE},
  year         = {2018},
  doi          = {10.1109/CVPR.2018.00068},
}

@article{zhang2025transferable,
  author       = {Zhang, Jie and
                  Dong, Shuai and
                  Shan, Shiguang and
                  Chen, Xilin},
  title        = {Toward Transferable Defense Against Malicious Image Edits},
  journal      = {IEEE Transactions on Pattern Analysis and Machine Intelligence},
  volume       = {48},
  number       = {7},
  pages        = {8134--8147},
  year         = {2026},
  doi          = {10.1109/TPAMI.2026.3670292},
}

@article{zhang2026survey,
  title={Adversarial Attacks for Good: A Survey of Proactive Protection across the Visual Content Lifecycle},
  author={Zhang, Jiaming and Chen, Boyang and Li, Zherui and Zhang, Fuyao and Yan, Xinyu and Tae, Hong Xi and He, Wenwen and Wang, Xuan and Guo, Siqi and Dong, Junhao and others},
  journal={arXiv preprint arXiv:2608.04314},
  year={2026}
}

@article{liu2026diffprotect,
  author       = {Liu, Jiang and
                  Lau, Chun Pong and
                  Guo, Zhongliang and
                  Guo, Yuxiang and
                  Wang, Zhaoyang and
                  Chellappa, Rama},
  title        = {{DiffProtect}: Generative adversarial examples using diffusion models for facial privacy protection},
  journal      = {Pattern Recognition},
  volume       = {173},
  pages        = {112780},
  year         = {2026},
  doi          = {10.1016/j.patcog.2025.112780},
}

@article{guo2026artwork,
  author       = {Guo, Zhongliang and
                  Qian, Yifei and
                  Zhao, Shuai and
                  Dong, Junhao and
                  Li, Yanli and
                  Arandjelovi{\'c}, Ognjen and
                  Fang, Lei and
                  Lau, Chun Pong},
  title        = {Artwork protection against unauthorized neural style transfer and aesthetic color distance metric},
  journal      = {Pattern Recognition},
  volume       = {171},
  pages        = {112105},
  year         = {2026},
  doi          = {10.1016/j.patcog.2025.112105},
}

@inproceedings{guo2024laaca,
  author       = {Guo, Zhongliang and
                  Dong, Junhao and
                  Qian, Yifei and
                  Wang, Kaixuan and
                  Li, Weiye and
                  Guo, Ziheng and
                  Wang, Yuheng and
                  Li, Yanli and
                  Arandjelovi{\'c}, Ognjen and
                  Fang, Lei},
  title        = {Artwork Protection Against Neural Style Transfer Using Locally Adaptive Adversarial Color Attack},
  booktitle    = {European Conference on Artificial Intelligence ({ECAI})},
  series       = {Frontiers in Artificial Intelligence and Applications},
  volume       = {392},
  pages        = {1414--1421},
  publisher    = {IOS Press},
  year         = {2024},
  doi          = {10.3233/FAIA240643},
}

@article{lau2023ijsat,
  author       = {Lau, Chun Pong and
                  Liu, Jiang and
                  Souri, Hossein and
                  Lin, Wei-An and
                  Feizi, Soheil and
                  Chellappa, Rama},
  title        = {Interpolated Joint Space Adversarial Training for Robust and Generalizable Defenses},
  journal      = {IEEE Transactions on Pattern Analysis and Machine Intelligence},
  volume       = {45},
  number       = {11},
  pages        = {13054--13067},
  year         = {2023},
  doi          = {10.1109/TPAMI.2023.3286772},
}

@article{liu2022mutualat,
  author       = {Liu, Jiang and
                  Lau, Chun Pong and
                  Souri, Hossein and
                  Feizi, Soheil and
                  Chellappa, Rama},
  title        = {Mutual Adversarial Training: Learning Together is Better Than Going Alone},
  journal      = {IEEE Transactions on Information Forensics and Security},
  volume       = {17},
  pages        = {2364--2377},
  year         = {2022},
  doi          = {10.1109/TIFS.2022.3184262},
}

@inproceedings{rombach2022ldm,
  author       = {Rombach, Robin and
                  Blattmann, Andreas and
                  Lorenz, Dominik and
                  Esser, Patrick and
                  Ommer, Bj{\"o}rn},
  title        = {High-Resolution Image Synthesis with Latent Diffusion Models},
  booktitle    = {Proceedings of the {IEEE/CVF} Conference on Computer Vision and Pattern
                  Recognition ({CVPR})},
  pages        = {10674--10685},
  publisher    = {IEEE},
  year         = {2022},
  doi          = {10.1109/CVPR52688.2022.01042},
}

@article{wang2004ssim,
  author       = {Wang, Zhou and
                  Bovik, Alan C. and
                  Sheikh, Hamid R. and
                  Simoncelli, Eero P.},
  title        = {Image quality assessment: from error visibility to structural similarity},
  journal      = {IEEE Transactions on Image Processing},
  volume       = {13},
  number       = {4},
  pages        = {600--612},
  year         = {2004},
  doi          = {10.1109/TIP.2003.819861},
}

@article{yam2025myface,
  title={My face is mine, not yours: Facial protection against diffusion model face swapping},
  author={Yam, Hon Ming and Guo, Zhongliang and Lau, Chun Pong},
  journal={arXiv preprint arXiv:2505.15336},
  year={2025}
}

@article{ahn2026imperceptible,
  author       = {Ahn, Namhyuk and Ahn, Wonhyuk and Yoo, KiYoon and Kim, Daesik and Nam, Seung-Hun},
  title        = {Imperceptible Protection Against Style Imitation From Diffusion Models},
  journal      = {IEEE Transactions on Multimedia},
  volume       = {28},
  pages        = {4286--4300},
  year         = {2026},
  doi          = {10.1109/TMM.2026.3660109},
}

@article{zhu2025lowfreq,
  author       = {Zhu, Jiang and Tan, Lingping and Li, Yanchun and Tian, Shujuan and Li, Jianqi and Wang, Yaonan},
  title        = {Guided Adversarial Attack in the Low-Frequency Space},
  journal      = {IEEE Transactions on Multimedia},
  volume       = {27},
  pages        = {8029--8044},
  year         = {2025},
  doi          = {10.1109/TMM.2025.3604964},
}

@article{ran2024amda,
  author       = {Ran, Ran and Wei, Jiwei and Zhang, Chaoning and Wang, Guoqing and Yang, Yang and Shen, Heng Tao},
  title        = {Adaptive Multi-Scale Degradation-Based Attack for Boosting the Adversarial Transferability},
  journal      = {IEEE Transactions on Multimedia},
  volume       = {26},
  pages        = {10979--10990},
  year         = {2024},
  doi          = {10.1109/TMM.2024.3428311},
}

@article{ma2026real,
  author       = {Ma, Lina and Fu, Xiaowei and Huang, Fuxiang and Gao, Xinbo and Zhang, Lei},
  title        = {Rectifying Adversarial Sample With Low Entropy Prior for Test-Time Defense},
  journal      = {IEEE Transactions on Multimedia},
  volume       = {28},
  pages        = {1104--1118},
  year         = {2026},
  doi          = {10.1109/TMM.2025.3632635},
}

@article{sanchez2020privacy,
  author       = {Sanchez-Matilla, Ricardo and Li, Chau Yi and Shamsabadi, Ali Shahin and Mazzon, Riccardo and Cavallaro, Andrea},
  title        = {Exploiting Vulnerabilities of Deep Neural Networks for Privacy Protection},
  journal      = {IEEE Transactions on Multimedia},
  volume       = {22},
  number       = {7},
  pages        = {1862--1873},
  year         = {2020},
  doi          = {10.1109/TMM.2020.2987694},
}

\clearpage
\setcounter{section}{0}
\setcounter{equation}{0}
\setcounter{figure}{0}
\setcounter{table}{0}
\renewcommand{\thesection}{S\arabic{section}}
\renewcommand{\theequation}{S\arabic{equation}}
\renewcommand{\thefigure}{S\arabic{figure}}
\renewcommand{\thetable}{S\arabic{table}}
\renewcommand{\theHsection}{S\arabic{section}}
\renewcommand{\theHequation}{S\arabic{equation}}
\renewcommand{\theHfigure}{S\arabic{figure}}
\renewcommand{\theHtable}{S\arabic{table}}
\twocolumn[{\centering\LARGE Supplementary Material for\\ ``\TitleText''\par\vspace{1.5em}}]
\noindent Section, theorem, equation, figure, and table numbers without the prefix ``S'' refer to the main paper.
\newcommand{\SuppQual}[2]{%
\begin{figure*}[p]
\centering
\includegraphics[width=\textwidth]{figures/qualitative_supp_#1.pdf}
\caption{Selected examples of real edits by #2 (row labels give the downscaling factor and the instruction, ``Turn this photo into a watercolor painting'' or ``Make it snowy winter''; cells are center-cropped to squares).
Badges give the edit LPIPS to the edit of the unprotected photo.}
\label{fig:supp-qual-#1}
\end{figure*}}
\SuppQual{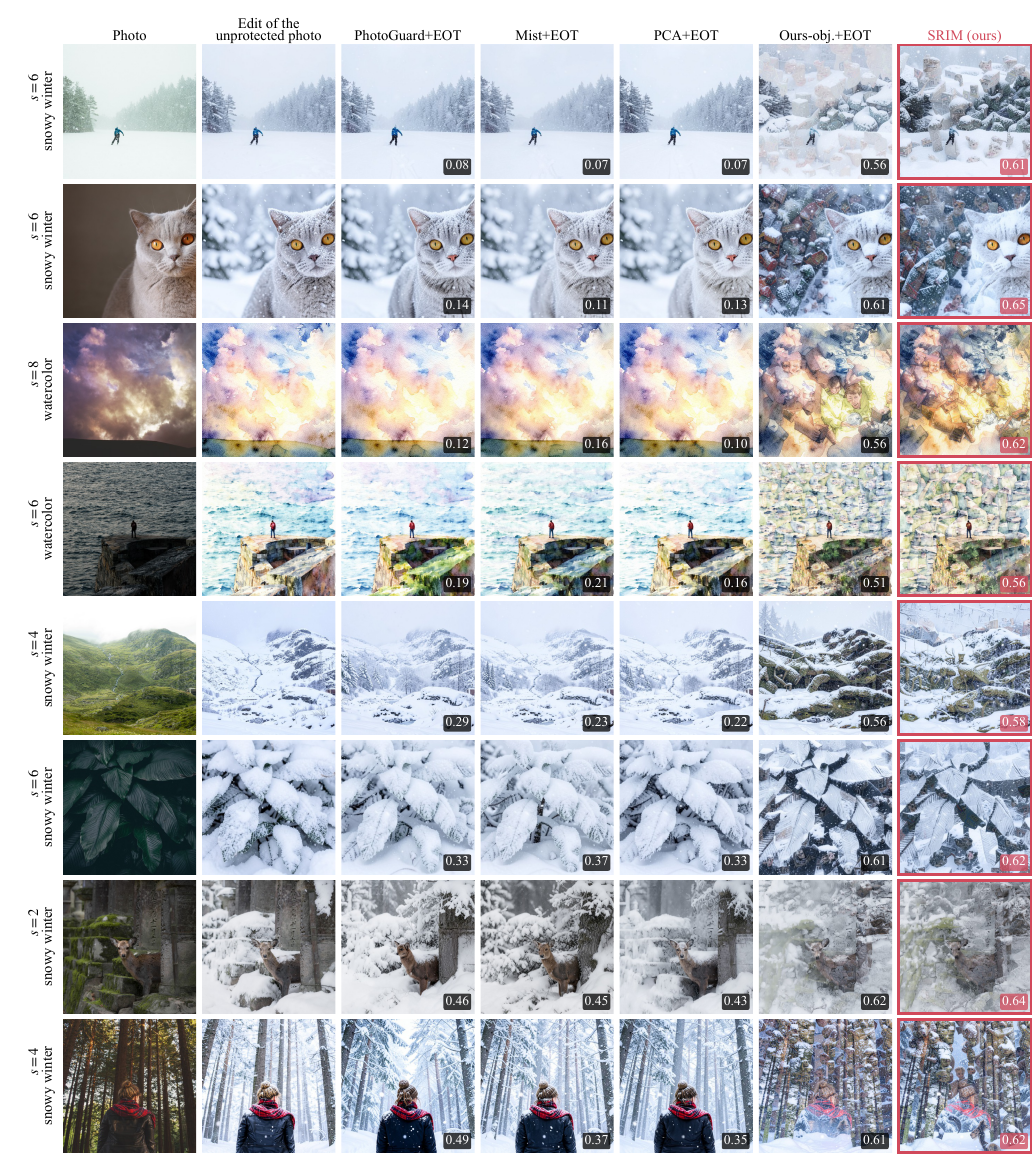}{FLUX.2-klein 4B}
\SuppQual{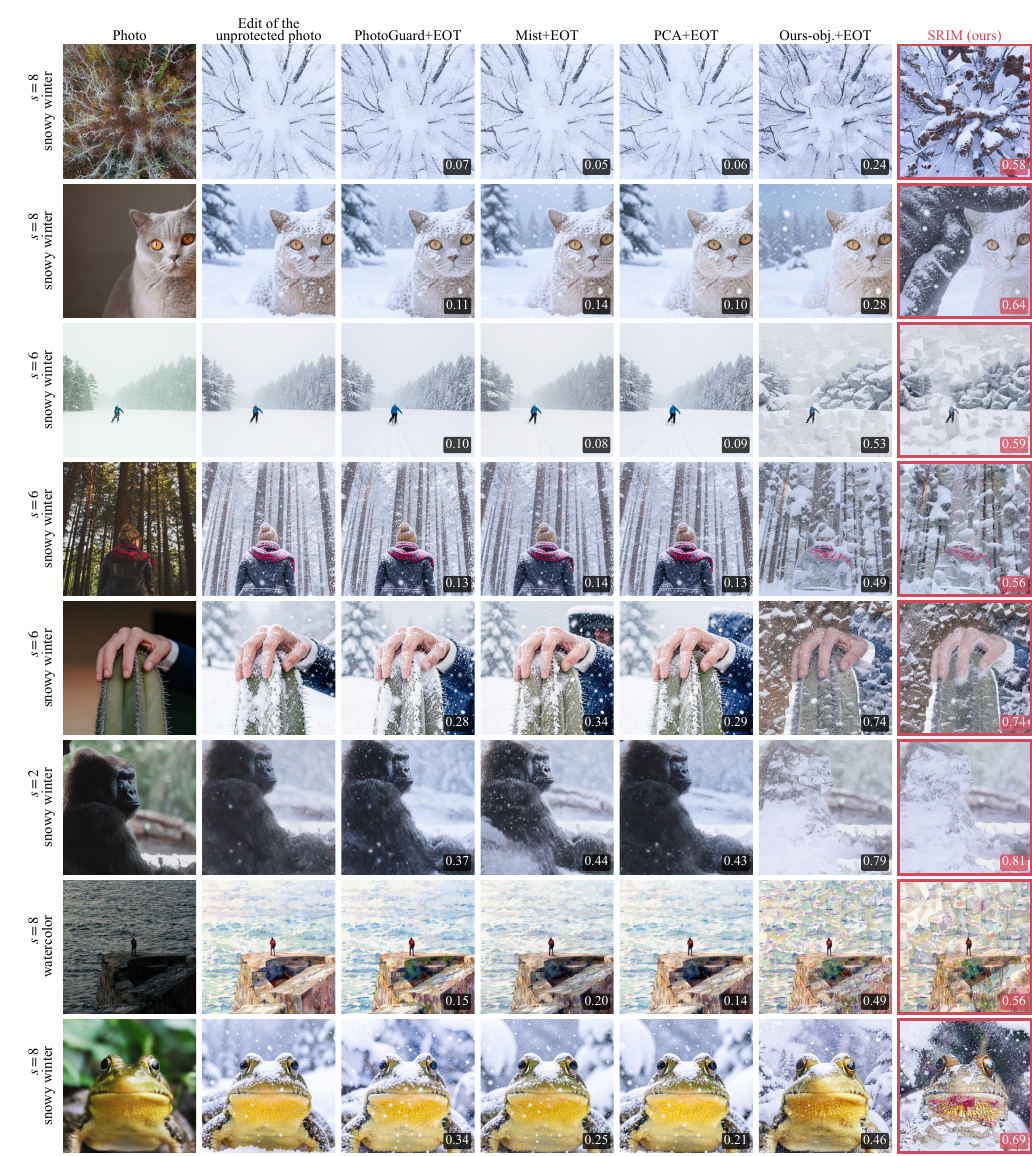}{FLUX.2-klein 9B}
\SuppQual{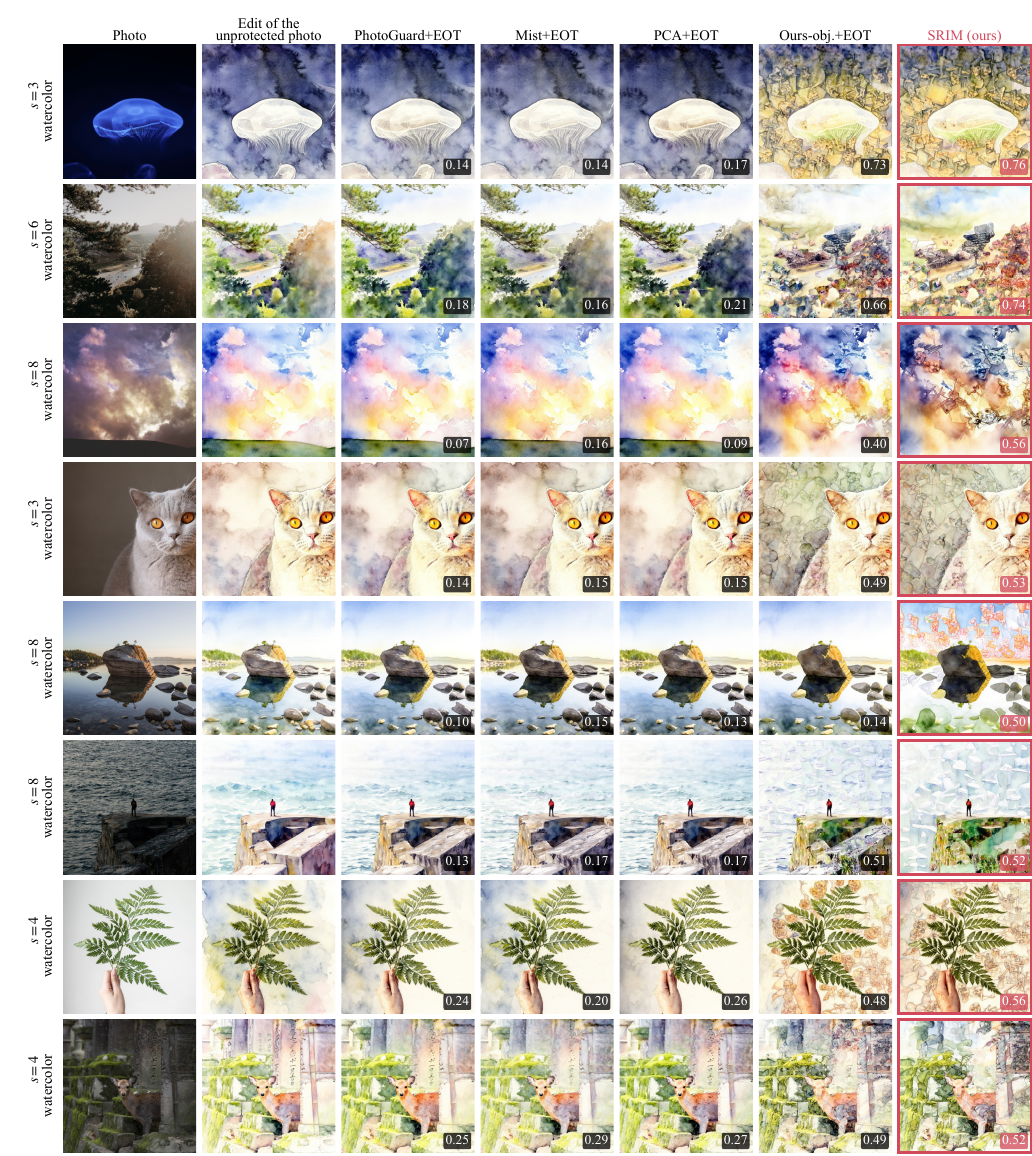}{FLUX.2-dev}

\section{Implementation details}
\label{sec:supp-impl}

\paragraph{Photos and splits}
We use the Unsplash Lite dataset and keep photos with at least 12 megapixels and an aspect ratio of at most $1.8$ (19{,}232 eligible photos).
Photos are ordered by the SHA-1 hash of their identifier, split into people and scenes by keywords in their descriptions, and resized to a width of 4096 pixels.
The first 8 photos of each category form the development set, the next 20 of each category were used once each to confirm the method before testing, and the remaining 80 of each category form the test set (160 photos).
Images are center-cropped to multiples of 16 pixels.
The Unsplash license permits this use; we release the photo identifiers and the split rather than the photos.

\paragraph{Editors}
FLUX.2-klein (4B and 9B) runs 4 steps with guidance 1.0; FLUX.2-dev runs 28 steps with guidance 4.0; InstructPix2Pix runs 50 steps with image guidance 1.5 and text guidance 7.5.
Each editor receives the protected photo after Lanczos downscaling (Python Imaging Library) to the working size given by $s$, rounded to multiples of 16 pixels, and edits with fixed seeds.
Instructions: ``Turn this photo into a watercolor painting'' and ``Make it snowy winter''.
The grid of $s$ is the same for every photo, so for some photos its ends lie outside the range covered by the protections; at $s=2$, FLUX.2-dev receives inputs beyond its working range for portrait photos.
FLUX.2-klein 4B and 9B use $s\in\{2,3,4,6,8\}$ and both instructions on all 160 test photos.
InstructPix2Pix uses $s\in\{4,6,8\}$ and both instructions on all 160 photos.
FLUX.2-dev uses $s\in\{3,4,6,8\}$ and both instructions on all 160 photos, with PhotoGuard+EOT, Mist+EOT, PCA+EOT, PCA@1MP, Ours-obj.+EOT, and \method.
The 9B model runs with CPU offloading of the text encoder and transformer; for FLUX.2-dev, the instruction embeddings are computed once and the transformer is split across two GPUs, which leaves the computation unchanged.

\paragraph{Metrics}
LPIPS uses the AlexNet backbone~\cite{zhang2018lpips}; CLIP image and text similarities use ViT-L/14~\cite{radford2021clip}.
All metrics compare the edit of the protected photo with the edit of the clean photo under the same instruction and seed.

\paragraph{Baselines}
PhotoGuard (encoder attack), Mist (textural loss), and PCA are re-implemented on the editor's VAE following the official code.
PhotoGuard minimizes the latent distance to the latent of a gray image, and Mist minimizes the distance to the latent of the Mist target pattern resized to the input size.
PCA ascends the posterior-collapse objective with $v=10^{-8}$.
All baselines use the same budget ($\epsilon=4/255$), 200 sign-gradient steps with step size $\epsilon/4$ and cosine decay, and the same EOT scale range as \method in their \texttt{+EOT} variants.
The \texttt{@1MP} variants run at a working size of about one megapixel and upsample the perturbation bicubically, clipped to the budget.
Native-resolution application on the test photos exceeds 44\,GB of GPU memory for most photos.

\paragraph{\method}
Anchors form a geometric grid with ratio $1.4$ over $[s_{\min},s_{\max}]$ with $P_{\min}=0.25$ and $P_{\max}=4.2$ megapixels; each step draws one anchor, jitters it by a factor in $[e^{-0.1},e^{0.1}]$, and draws one of the bicubic, bilinear, and area filters (antialiased, differentiable).
The moving average uses weight $0.3$ for the newest deviation; $\tau=0.1$; the perturbation is initialized uniformly in $[-\epsilon,\epsilon]$ and rounded to 8 bits at the end.
All settings were fixed on the development set before any test run.

\section{Additional qualitative examples}
\label{sec:supp-results}

\Cref{fig:supp-qual-flux2klein,fig:supp-qual-flux2klein9b,fig:supp-qual-flux2dev} give more selected examples of real edits on the three FLUX.2 editors, eight photos each, at several downscaling factors and with both instructions.

\end{document}